\documentclass{article}

\usepackage{microtype}
\usepackage{graphicx}
\usepackage{subcaption}
\usepackage{booktabs} 

\usepackage{hyperref}
\PassOptionsToPackage{dvipsnames}{xcolor}

\usepackage[accepted]{util/icml2026}

\usepackage{amsmath}
\usepackage{amssymb}
\usepackage{mathtools}
\usepackage{amsthm}
\usepackage{tikz}
\usepackage{xspace}
\usepackage[capitalize,noabbrev]{cleveref}
\usepackage[frozencache,cachedir=mintedcache]{minted}

\newtheorem{theorem}{Theorem}[section]

\newtheorem{corollary}[theorem]{Corollary}
\newtheorem{definition}[theorem]{Definition}
\BeforeBeginEnvironment{definition}{\vspace{0.5em}}
\newenvironment{superficialityview}[1]
  {\par\noindent\textbf{#1}\space\space\space\itshape}
  {\par}

\newtheorem{remark}[theorem]{Remark}

\usepackage{xspace}
\usepackage{todonotes}
\usepackage[normalem]{ulem}

\makeatletter
\newcommand*\iftodonotes{\if@todonotes@disabled\expandafter\@secondoftwo\else\expandafter\@firstoftwo\fi}  
\makeatother

\DeclareMathOperator{\expect}{\mathbb{E}}

\newcommand{\colourbase}{purple}
\newcommand{\mymacro}[2]{\newcommand{#1}{{\color{\colourbase}#2}}}

\newcommand{\defn}[1]{\textbf{#1}}
\mymacro{\model}{\theta}
\mymacro{\modelbase}{\mathrm{P}_{\mathtt{base}}}
\mymacro{\program}{\mathsf{P}}
\mymacro{\adaptation}{\mathsf{P}_\mathtt{adapt}}
\mymacro{\modelft}{\mathrm{P}_{\mathtt{ft}}}
\mymacro{\real}{\mathbb{R}}
\mymacro{\natnumbers}{\mathbb{N}}
\mymacro{\scoringfn}{\mathcal{S}}
\mymacro{\kolmogorov}{\mathrm{K}}
\mymacro{\bitspace}{\{ 0,1\}^*}
\mymacro{\taskcomplexity}{\mathrm{C}}
\mymacro{\information}{\mathrm{I}}
\mymacro{\fsize}{\mathtt{size}}
\mymacro{\finetunefunc}{f_{\mathtt{ft}}}
\mymacro{\faccuracy}{\mathtt{score}_{\task}}
\mymacro{\task}{\mathtt{T}}
\mymacro{\targetkolmogorov}{b}
\mymacro{\targetacc}{\delta}
\mymacro{\machine}{\mathrm{M}}
\mymacro{\inputspace}{\mathcal{X}}
\mymacro{\outputspace}{\mathcal{Y}}
\mymacro{\programspace}{\mathcal{P}}
\mymacro{\inputsdist}{p}
\mymacro{\gsm}{\mathtt{GSM}}
\mymacro{\length}{\text{len}}
\mymacro{\tasktuple}{(\inputspace, \outputspace, \inputsdist, \scoringfn)}
\mymacro{\inputvalue}{x}
\mymacro{\outputvalue}{y}
\mymacro{\dataset}{\mathcal{D}}
\mymacro{\conversion}{\alpha}

\mymacro{\datapoint}{\inputvalue}
\mymacro{\adapterweights}{\theta_\texttt{ADAPT}}
\mymacro{\prompt}{\kappa}
\mymacro{\taskA}{\task}
\mymacro{\taskB}{\tilde{\task}}

\newcommand*{\circled}[1]{\tikz[baseline=(char.base)]{
          \node[shape=circle,draw,line width=1.2pt,inner sep=1pt] (char) {\normalfont{\small #1}};}}

\definecolor{posttraining}{HTML}{1E88E5}
\definecolor{pretraining}{HTML}{D81B60}

\definecolor{data}{HTML}{1E88E5}
\definecolor{parametric}{HTML}{FFC107}
\definecolor{inferencecontrol}{HTML}{D81B60}
\newcommand{\circleddata}{\textcolor{data}{\circled{1}}\xspace}
\newcommand{\circledparams}{\textcolor{parametric}{\circled{2}}\xspace}
\newcommand{\circledinference}{\textcolor{inferencecontrol}{\circled{3}}\xspace}

\begin{document}

\twocolumn[
\icmltitle{Operationalising the Superficial Alignment Hypothesis via Task Complexity}



\icmlsetsymbol{equal}{*}

\begin{icmlauthorlist}
\icmlauthor{Tomás Vergara-Browne}{mila,mcgill}
\icmlauthor{Darshan Patil}{mila,udem}
\icmlauthor{Ivan Titov}{edin,amst}
\icmlauthor{Siva Reddy}{mila,mcgill,cifar}
\end{icmlauthorlist}
\begin{icmlauthorlist}
\icmlauthor{Tiago Pimentel}{equal,eth}
\icmlauthor{Marius Mosbach}{equal,mila,mcgill}
\end{icmlauthorlist}

\icmlaffiliation{mila}{Mila Quebec AI Institute}
\icmlaffiliation{mcgill}{McGill University}
\icmlaffiliation{udem}{Université de Montréal}
\icmlaffiliation{edin}{University of Edinburgh}
\icmlaffiliation{amst}{University of Amsterdam}
\icmlaffiliation{eth}{ETH Zürich}
\icmlaffiliation{cifar}{Canada CIFAR AI Chair}

\icmlcorrespondingauthor{Tomás Vergara-Browne}{tomas.vergarabrowne@mila.quebec}
\icmlcorrespondingauthor{Tiago Pimentel}{tiago.pimentel@inf.ethz.ch}
\icmlcorrespondingauthor{Marius Mosbach}{marius.mosbach@mila.quebec}

\icmlkeywords{Machine Learning, ICML}

\vskip 0.3in
]


\printAffiliationsAndNotice{$^*$ Equal advising.}  


\begin{abstract}
The superficial alignment hypothesis (SAH) posits that large language models learn most of their knowledge during pre-training, and that post-training merely surfaces this knowledge.\footnote{To avoid confusion with the AI safety literature, we will say models are `adapted', rather than `aligned', to a task. We use the name `superficial alignment hypothesis', however, as opposed to `superficial adaptation hypothesis', due to historical reasons.}
The SAH, however, lacks a precise definition, which has led to (i) different and seemingly orthogonal arguments supporting it, and (ii) important critiques to it.
We propose a new metric called \defn{task complexity}: the length of the shortest program that achieves a target performance on a task.
In this framework, the SAH simply claims that pre-trained models drastically reduce the complexity of achieving high performance on many tasks.
Our definition unifies prior arguments supporting the SAH, interpreting them as different strategies to find such short programs.
Experimentally, we estimate the task complexity of mathematical reasoning, machine translation, and instruction following; we then show that these complexities can be remarkably low when conditioned on a pre-trained model.
Further, we find that pre-training enables access to strong performances on our tasks, but it can require programs of gigabytes of length to access them.
Post-training, on the other hand, collapses the complexity of reaching this same performance by several orders of magnitude.
Overall, our results highlight that task adaptation often requires surprisingly little information---often just a few kilobytes.\looseness=-1

\end{abstract}
\begin{figure}[t]
    \centering
    \includegraphics[width=0.95\columnwidth]{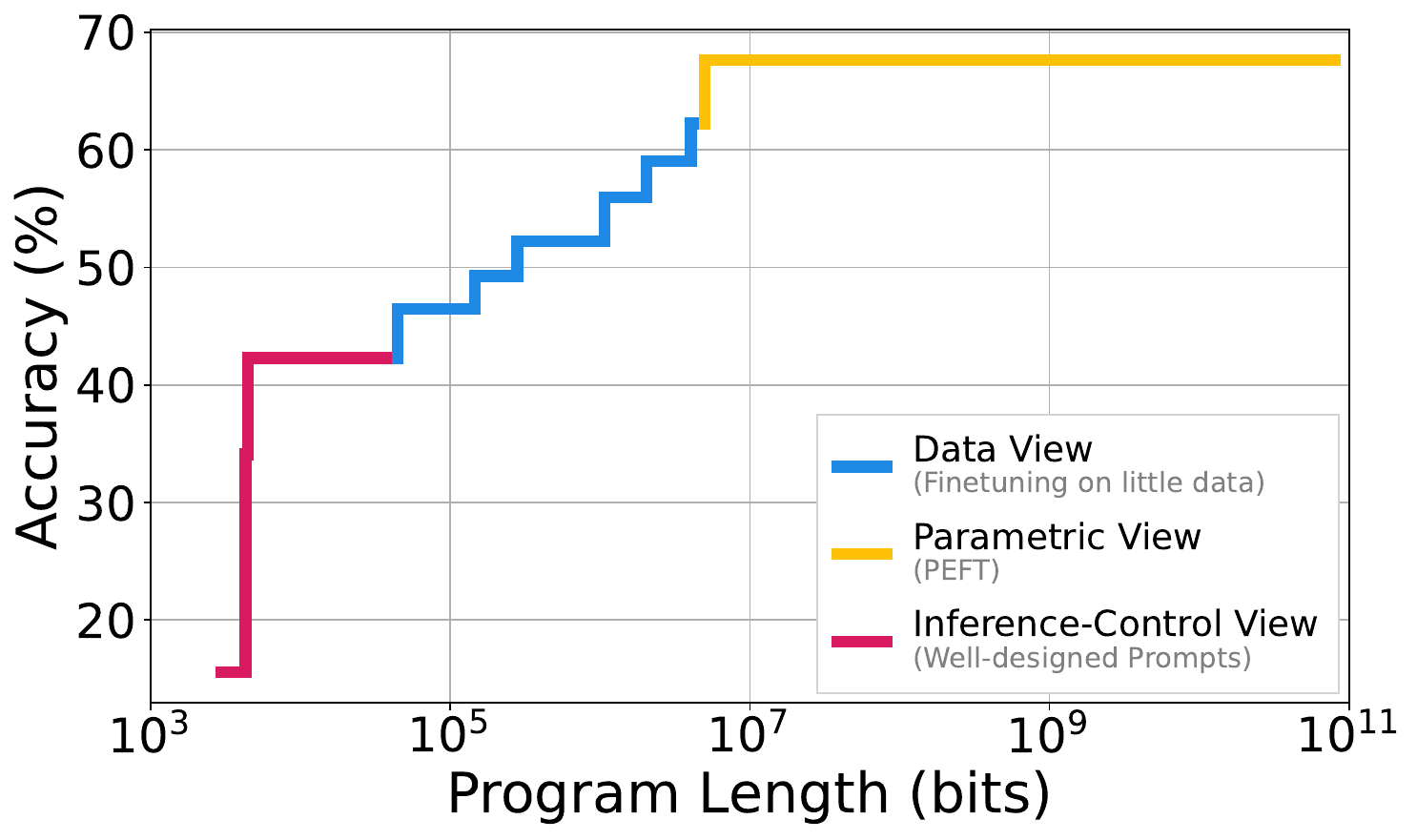}
    \vspace{-0.07in}
    \caption{Pareto curve of program length vs.\ performance for Olmo3-7B on GSM8K. 
    We argue that prior works about the superficial alignment hypothesis can be seen as proposing different approaches to find short programs to solve a task,
    and we find that these different views inform different regions of this Pareto curve.
    }
    \label{fig:teaser}
    \vspace{-0.2in}
\end{figure}

\section{Introduction}
\label{introduction}

The superficial alignment hypothesis (SAH) states that a large language model’s \textit{``knowledge and capabilities are learned almost entirely during pre-training, while alignment teaches it which subdistribution of formats should be used when interacting with users''} \cite{zhou2023lima}.
This hypothesis has been supported from different angles, e.g., we can adapt pre-trained models to solve complex tasks by: 
\circleddata finetuning it on few examples \citep{zhou2023lima, ye2025limo}, 
\circledparams updating only a few of its parameters \cite{hanparameter, li2024superficial}, or 
\circledinference using well-designed prompts \cite{lin2023unlocking, lake2025distributional}.

Despite its appeal, however, the SAH suffers from imprecise definitions of important terms such as `knowledge', `capabilities', and `subdistribution of formats'. 
This ambiguity has created two main problems. 
First, arguments supporting the hypothesis are seemingly orthogonal: some emphasise the small data requirements to adapt a model (\circleddata above), others the number of parameters needed to be updated  (\circledparams), and still others the fact that adaptation can be achieved through inference-time control alone (\circledinference). 
This makes it unclear whether these reflect a common underlying phenomenon. 
Second, critics have exploited these undefined terms to challenge the hypothesis. 
\citet{raghavendra2024revisitingsuperficialalignmenthypothesis} and \citet{lambert2025reinforcement}, for instance, interpret that if a model contains `knowledge' and `capabilities' for a task, its performance on the task should saturate very easily; 
by showing that, on many tasks, performance saturation requires a significant amount of fine-tuning or post-training, they conclude the SAH is incomplete. 
These tensions highlight the need for a more precise operationalisation of the SAH  that can bring clarity to this debate.



In this paper, we provide a clearer definition of the superficial alignment hypothesis, grounded in algorithmic information theory.
We first define the \defn{complexity of a task at performance $\boldsymbol{\targetacc}$}
as the length of the shortest program to achieve performance $\targetacc$ on it.
We then argue that the SAH can be operationalised as claiming that many complex tasks---i.e., tasks for which no short program exists which achieves a certain target performance on it---become low-complexity once we give our programs access to a pre-trained model.
This perspective places seemingly orthogonal arguments supporting the SAH on a common, quantitative footing:
arguments based on data efficiency, parameter efficiency, and inference control define different classes of programs whose lengths can be directly compared (see \Cref{fig:teaser}).
These arguments are, in fact, instances of the same underlying principle: there exist short programs to achieve target performance in the task. 
The arguments do not compete with each other, but rather simply use different strategies to find such programs.\looseness=-1

Experimentally, we measure the task complexity of three diverse NLP tasks (mathematical reasoning, machine translation and instruction following) when given access to the weights of either: SmolLM3 3B \cite{bakouch2025smollm3}, Olmo3 7B, or Olmo3 32B \cite{olmo2025olmo3}. 
We find that programs as small as $1.2 \times 10^6$ bits (which is 151 kilobytes, or the size of a single image from ImageNet; \citealp{russakovsky2015imagenet}) can adapt these large language models (LLMs) to achieve strong performance on some of these tasks. 

We further analyse how task complexity evolves during training by examining randomly initialised, pre-trained, and post-trained checkpoints of SmolLM3 and Olmo3 7B.
Strong performance on these tasks first becomes accessible with pre-trained checkpoints, but requires the use of long programs to plateau the obtained performance (typically on the order of megabytes to gigabytes).
Post-trained checkpoints enable access to the same strong performance, however, with dramatically shorter programs.


Overall, our work provides a rigorous framework to understand the superficiality of LLM adaptation from an algorithmic information perspective: adapting a pre-trained model for a task often requires just kilobytes of information.

\vspace{-0.1in}
\section{Perspectives on Superficial Adaptation}
\label{superficiality}
\vspace{-0.03in}

We present 3 different views from prior work which highlight the idea that knowledge about a task is already present in pre-trained models, and not learned at later adaptation stages.
These are: the data view, the parametric view, and the inference-control view.

\begin{superficialityview}{\circleddata Data view.}
Adaptation is superficial because we can finetune an LLM on few data points to adapt it to a new task.
\end{superficialityview}

Adapting a pre-trained LLM to a task requires remarkably little data in comparison to the amount of data used in pre-training \cite{zhou2023lima, qi2023fine, ye2025limo, muennighoff2025s1}.
For instance, \citet{zhou2023lima} show that 1000 examples are enough to adapt a pre-trained LLM into a strong instruction-following model.
When adopting the data view, the contrast between the small amount of data necessary for adaptation and the much larger amounts of data used in pre-training can be interpreted as evidence for the superficiality of adaptation.

\begin{superficialityview}{\circledparams Parametric view.\!\!\!\!\!\!}
Adaptation is superficial because only few parameters need to be changed to adapt an LLM to a new task.
\end{superficialityview}

There is ample evidence showing that fine-tuning a pre-trained LLM requires a surprisingly small amount of trainable parameters \cite{li2021prefix, hu2022lora, liu2022few, zaken2022bitfit, liu2022p, dettmers2023qlora, yadav2023compeft, hanparameter, li2024superficial, ping2024delta, liu2024bitdelta, huang2025dynamic, morris2026learningreason13parameters}.
For example, LoRA \cite{hu2022lora} allows us to efficiently finetune LLMs by adding trainable adapters which represent a small fraction (less than 1\%) of a model's parameters. 
The contrast between the small amount of trainable parameters during adaptation and the large amount of parameters needed in pre-training can be seen as evidence for the superficiality of adaptation.

\begin{superficialityview}{\circledinference Inference-control view.}\!
Adaptation is superficial because we can keep all LLM weights frozen, and only make simple changes to its inference code to adapt it to a task.
\end{superficialityview}

In some cases, it is not even necessary to modify the parameters of a pre-trained LLM at all to perform well on a downstream task \cite{brown2020language, lin2023unlocking, hewitt2024instruction, mindercontrollable, braun2025understanding, chen2025extracting, karan2025reasoning}.
For example, \citet{lin2023unlocking} show how to carefully design a prompt which makes a pre-trained LLM follow instructions. 
Under this view, the fact that no weights have to be changed at all can be seen as evidence for the superficiality of adaptation.

\newcommand{\defeq}{\overset{\textup{\textrm{\tiny def}}}{=}}

\vspace{-0.1in}
\section{The Superficial Alignment Hypothesis}
\label{theory}
While the previous three views may seem orthogonal, their results can still be interpreted as evidence of superficiality.
Why is that?
We argue that each of them evokes a sense that most of the information required to perform well on a task is already present in the pre-trained model.
We ground this intuition from the perspective of algorithmic information theory \cite{li2008introduction}.
These definitions will allow us to interpret each of the \circleddata data view, \circledparams parametric view, and \circledinference inference-control view, as short \textit{programs} which do not add a lot of information to a pre-trained model.
We start by defining the concept of a task.\looseness=-1

\begin{definition}
    Let a \defn{task} $\task$ be a tuple $\tasktuple$, where:%
    \begin{itemize}
        \item $\inputspace$ and $\outputspace$ are, respectively, an input and output space.
        \item $\inputsdist: \inputspace \to [0, 1]$ is a probability distribution over inputs $\inputvalue \in \inputspace$.
        \item $\scoringfn: \inputspace \times \outputspace \to \mathbb{R}$ is a scoring function which evaluates the quality of an input-output pair.
    \end{itemize}
\end{definition}

Let us use the example of solving grade school math problems as our task $\task$.
Both $\inputspace$ and $\outputspace$ are the space of text strings, and
$\inputsdist$ is a distribution over grade school math problems written in text.
For example, $\inputsdist$ would assign a non zero probability to an $\inputvalue \in \inputspace$ such as \emph{``how many fruits are three apples and two oranges?''}. 
The scoring function $\scoringfn$ takes a problem $\inputvalue$ and an answer $\outputvalue$, and returns $1$ if $\outputvalue$ correctly solves $\inputvalue$, and $0$ otherwise.
So given a $\outputvalue \in \outputspace$ of \emph{``five''}, we would have $\scoringfn(\inputvalue, \outputvalue) = 1$.

Now, let $\machine$ be a Universal Turing Machine. This machine takes in a program $\program \in \programspace$ and an input $\inputvalue \in \inputspace$, to compute an output $\outputvalue \in \outputspace$.\footnote{We define a program as a string of bits, and we assume that the elements in $\inputspace$ or $\outputspace$ can be encoded as bit-strings, i.e., $\programspace = \bitspace$ and $\inputspace, \outputspace \subseteq \bitspace$, where $^*$ denotes the Kleene star operator.}
We can write $\machine(\program, \inputvalue)$ as the output of running program $\program$ on input $\inputvalue$ (if it halts). 
As throughout our discussion we keep $\machine$ unspecified and fixed, we will use $\program(\inputvalue)$ to represent $\machine(\program, \inputvalue)$.
We can now define the score of a program $\program$ on a task $\task$ as its expected performance on it.\looseness=-1

\begin{definition}
    Given a task $\task = \tasktuple$ and a program $\program$, we define the \defn{score} of $\program$ on $\task$ as:
    \begin{align}
        \faccuracy(\program) \defeq \expect_{\inputvalue \sim \inputsdist} [\scoringfn(\inputvalue, \program(\inputvalue))]
    \end{align}
\end{definition}

Coming back to our example, we can design a program $\program$ that achieves some score on task $\task$. The program $\program$ may include any logic necessary to output some response $\outputvalue$ whenever an input $\inputvalue$ is provided. The program must, however, be completely self-contained, meaning that it cannot access anything that is not directly encoded in $\program$ or $\inputvalue$. The score of this program $\program$ is represented as $\faccuracy(\program)$, and we will say that program $\program$ \emph{solves} $\task_\targetacc$ if $\faccuracy(\program) \geq \targetacc$.


\subsection{Task Complexity}

We now characterise the complexity of a task $\task$ at performance $\targetacc$ as the length of the shortest program that solves it.\looseness=-1

\begin{definition}
    The \defn{complexity of task $\boldsymbol{\task}$ at performance $\boldsymbol{\targetacc}$} is defined as:
    \begin{align}
        \taskcomplexity(\task_\targetacc) \defeq \min_{\program \in \programspace} \left\{ \length(\program) : \faccuracy(\program) \geq \targetacc \right\}
    \end{align}
\end{definition}

This metric quantifies the amount of information required to solve (or represent) task $\task$ at a certain performance level, being tightly related to both Kolmogorov complexity and rate-distortion theory (as further discussed in \Cref{sec:kolmogorov-relatedness}).
If a $\taskA_\targetacc$ is less complex than another $\taskB_\targetacc$, then we can solve $\task_\targetacc$ with a shorter program than what we need to solve $\taskB_\targetacc$.\looseness=-1

In our running example, $\taskcomplexity(\task_{90\%})$ would represent the length of the minimum program to solve grade school math problems with $90\%$ accuracy. 
Presumably, this program might be very long, as it requires understanding natural language, planning the steps for which to solve the problem and then computing the answer.
Given access to an LLM, however, even such a complex task might be greatly simplified.
This motivates our definition of conditional task complexity: 
the length of the minimum program $\program$ which, given access to a model's weights $\model$ (represented as a bit-string\footnote{We additionally consider the tokenizer for a language model to be part of the bit-string $\model$.}), achieves a certain performance at task $\task$.
We will use $\program_\model$ to represent a program $\program$ with access to a model's weights $\model$, where we treat this access to $\model$ as an additional input of the program, i.e., $\program_\model(\inputvalue) = \program(\inputvalue, \model)$.


\begin{definition}
\label{def:conditional} The \defn{complexity of task $\boldsymbol{\task}$ at performance $\boldsymbol{\targetacc}$ conditioned on $\boldsymbol{\model}$} is defined as:
    \begin{align}
        \taskcomplexity(\task_\targetacc \mid \model) \defeq \min_{\program \in \programspace} \left\{ \length(\program) : \faccuracy(\program_{\model}) \geq \targetacc \right\}
    \end{align}
\end{definition} 

While the task of solving grade school math problems with $90\%$ accuracy is very complex, i.e., $\taskcomplexity(\task_{90\%})$ is presumably very high, 
access to a pre-trained model $\model$ may lead to a considerably lower complexity $\taskcomplexity(\task_{90\%} \mid \model)$.
This reduction in complexity defines the amount of information that a model $\model$ contains about a task $\task$ at performance $\targetacc$.

\begin{definition}
\label{def:information} The \defn{information a model $\boldsymbol{\model}$ contains about a task $\task$ at performance $\targetacc$} is defined as:
    \begin{align}
        \information(\task_\targetacc; \model) \defeq \taskcomplexity(\task_\targetacc) - \taskcomplexity(\task_\targetacc \mid \model)
    \end{align}
\end{definition}

We derive a few theoretical properties of these proposed metrics in \cref{app:theory}, leveraging existing results in algorithmic information theory for related concepts. 

\subsection{Adaptability and SAH}

So far, our definitions focused on tasks and their complexity.
The SAH, however, is about pre-trained models, discussing how adaptable they are to new tasks.
We thus now put forward a definition of how adaptable a model is to a task.
Specifically, we say that a model is ($\targetkolmogorov, \targetacc$)-adaptable for a task $\task$ if there exists a program shorter than $\targetkolmogorov$ which allows the model to solve $\task_\targetacc$.

\begin{definition}
\label{def:superficial}
    Let $\model$ be the bit-string encoding the weights of a pre-trained model and $\task$ be a task. We say model $\model$ is \defn{($\boldsymbol{\targetkolmogorov}, \boldsymbol{\targetacc}$)-adaptable} to task $\task$ if:\looseness=-1
    \begin{align}
        \taskcomplexity(\task_\targetacc \mid \model) \leq \targetkolmogorov
    \end{align}
\end{definition}

Given this definition, we can now define the SAH as follows:

\begin{definition}{(Informal)} \label{sah}
Let $\model$ be a pre-trained model, $\targetkolmogorov$ a relatively small program size, and $\targetacc$ a relatively high performance.
The \textbf{superficial alignment hypothesis} argues that there exist many complex tasks $\task$ (with high $\taskcomplexity(\task_\targetacc)$) of practical interest, to which model $\model$ is ($\targetkolmogorov, \targetacc$)-adaptable.
\end{definition}

This grounds our intuitions that we can easily adapt a pre-trained language model for many tasks, leveraging the information already present in $\model$. 
However, our definition has one informal point: what are \emph{relatively low $\targetkolmogorov$} and \emph{relatively high $\targetacc$}? 
Is a program of $1k$ bits and $90\%$ accuracy superficial? 
Since there does not seem to be a principled way of defining a single budget--performance value ($\targetkolmogorov, \targetacc$) that we would call `superficial', we study the tradeoff between these parameters and
abstain from choosing a single target value to analyse.

\subsection{Connection to previous work}

\label{sec:kolmogorov-relatedness}

Our notion of Task Complexity is inspired by classical ideas from algorithmic information theory.  
In particular, Kolmogorov complexity \cite{kolmogorov1965three} measures the information content of a finite string $\outputvalue$ as the length of the shortest program\footnote{Note that, in this definition, the program takes no input value.} that outputs $\outputvalue$:
\begin{align} \label{eq:kolmogorov}
    \taskcomplexity(\outputvalue) \defeq \min_{\program \in \programspace} \left\{ \length(\program) : \program() = \outputvalue \right\}
\end{align}
Kolmogorov complexity, however, is generally not well suited to study the complexity of machine learning (ML) tasks. 
First, a task $\task$ includes a distribution $\inputsdist$ over a potentially infinite set of inputs $\inputspace$,  rather than being defined as a single finite string.
Second, ML is inherently concerned with approximate solutions:
achieving 90\% accuracy (or even 99\%) in a task $\task$ may require far less information than achieving 100\%, yet Kolmogorov complexity does not naturally express this performance-complexity tradeoff.

Algorithmic rate-distortion theory \cite{vereshchagin2010rate} addresses the second limitation above by allowing approximate outputs, defining complexity as the length of the shortest program which produces an output within a given distortion level of a target value. 
Interpreting this target to be an input $\inputvalue$, and the distortion to be measured via a scoring function $\scoringfn$, we get:
\begin{align} \label{eq:rate_distortion_theory}
    \taskcomplexity(\inputvalue_{\targetacc}) \defeq \min_{\program \in \programspace} \left\{ \length(\program) : \scoringfn(\inputvalue, \program()) \geq \targetacc \right\}
\end{align}
As can be seen, this equation measures the complexity of representing a fixed input $\inputvalue$ with a specific accuracy, and it thus does not address the first limitation above. 
Task complexity adapts \cref{eq:rate_distortion_theory} by replacing this single-target distortion with a `distortion' metric defined over the entire input space $\inputspace$. 
We thus say that task complexity generalises these two notions of complexity, being more suited to the study of machine learning tasks (see \Cref{remark:kolmogorov-special-case}).

Task complexity is also related to the minimum description length (MDL) principle \cite{grunwald2007minimum} and, more specifically, to prior probing methods inspired by it \cite{voita2020information, pimentel-etal-2020-pareto, pimentel2021bayesian}. 
These methods measure how compactly a dataset 
can be compressed given a pretrained model, and are typically used to analyse how easy-to-access is some information encoded in the model's representation.\footnote{Note that, due to the injectivity of transformers with real-valued hidden representations outputs \citep{sutter2025nonlinear,nikolaou2026language}, an LLM's representation always preserve all the information encoded in its inputs \citep{pimentel2020information}. For this reason, prior work has focused on ``easy-to-access information'', as opposed to the total information amount.}
In contrast, task complexity measures the length of the entire adaptation program required to achieve a target performance on the task. 
Also, by measuring program complexity rather than data description length alone, our framework lets us compare fundamentally different views of the SAH (e.g., inference-time control vs.\ data driven fine-tuning) within a single unified metric.

\begin{figure*}[t]
    \centering
    \begin{subfigure}[b]{0.33\textwidth}
        \begin{minted}[fontsize=\scriptsize,frame=single,framesep=2mm]{python}
from transformers import ...

compressed_data = [0.21, 1.52, ...]

for batch in compressed_data:
  batch = decompress(batch, model)
  model(batch).loss.backward()
  optimizer.step()

output = model(eval_input)
    \end{minted}
    \vspace{-0.10in}
    \caption{Data methods}
    \vspace{0.05in}
    \label{fig:script-data}
\end{subfigure}
    \hfill
    \begin{subfigure}[b]{0.33\textwidth}
        \begin{minted}[fontsize=\scriptsize,frame=single,framesep=2mm]{python}
from transformers import ...

adapter_weights = [
    0.3425,
    2.5145,
    ...
]
model.add_adapter(adapter_weights)

output = model(eval_input)
        \end{minted}
        \vspace{-0.10in}
        \caption{Parametric methods}
        \vspace{0.05in}
        \label{fig:script-param}
    \end{subfigure}
    \hfill
    \begin{subfigure}[b]{0.33\textwidth}
        \begin{minted}[fontsize=\scriptsize,frame=single,framesep=2mm]{python}
from transformers import ...

compressed_prompt = 0.124512213415
prompt = decompress(
    compressed_prompt,
    model
)

full_input = prompt + eval_input
output = model(full_input)
        \end{minted}
        \vspace{-0.10in}
        \caption{Inference-control methods}
        \vspace{0.05in}
        \label{fig:script-inference}
    \end{subfigure}
    \caption{Pseudo-code of the programs $\program$ constructed by strategies in \Cref{sec:estimating-complexity}. Each program includes its compressed data or parameters, and its size is usually dominated by such terms (\cref{fig:distribution-program-length}). We explain the details of how we measure program size in our experiments in \Cref{app:size}.}
    \label{fig:program-scripts}
\end{figure*}

\section{Estimating Task Complexity}
\label{sec:estimating-complexity}
\label{sec:methods}


We wish to compute the ($\targetkolmogorov, \targetacc$)-adaptability of pre-trained models $\model$ to common tasks $\task$.
However, the conditional task complexity $\taskcomplexity(\task_\targetacc \mid \model)$, on which adaptability depends, is uncomputable.\footnote{Our definition extends Kolmogorov complexity which is uncomputable (see \Cref{lemma:uncomputability}).}
Hence, we upper bound this conditional complexity 
by finding programs $\program$ which adapt model $\model$ to perform a task; after measuring this program's performance $\targetacc$, we know that its length will naturally upper bound the task's complexity $\taskcomplexity(\task_\targetacc \mid \model) \leq \length(\program )$.

Our definitions of program $\program$, and how to measure its length $ \length(\program )$ depends on what we define as our machine $\machine$.
Task complexity, though, is invariant (up to an additive constant) to this choice of machine $\machine$, as long as it is Turing complete (\cref{theorem:invariance}).
Here, we aim for a definition of $\machine$ that closely matches the programs that are used in practice to adapt pre-trained models for various tasks.
Hence, we define $\machine$ to be a python interpreter, where all python libraries are considered part of the machinery of the system.
This means that we do not consider the code of libraries (such as \texttt{transformers} or \texttt{numpy}) to be part of the length of a program $\program$.
Any real program designed to adapt and run language models can use these libraries, without the need to re-write all the utilities defined by these. 
To measure the length of a program $\length(\program)$, we need to measure the length in bits of the python script, and any of the model weights and data used in the script, like illustrated in \cref{fig:program-scripts}. As an illustrative example, if we have a python script that loads a pre-trained model $\model$, and some data $\dataset$ that is used to fine-tune the model $\model$, we would measure the size of the script in bits, plus the size of $\dataset$ in bits. In \cref{app:size} we leave additional details regarding the measurements of $\length(\program)$.\looseness=-1

To achieve a tight upper bound on $\taskcomplexity(\task_\targetacc \mid \model) \leq \length(\program )$, we should look for short programs which we expect to achieve high performance.
How do we select such programs?
We do that by leveraging the connection between our definition of task complexity and the three views of superficiality reviewed in \cref{superficiality}.
Each of these views represents different strategies for how to construct short, high-performance programs to adapt pre-trained models.
These strategies typically leverage training data $\dataset$ drawn from task $\task$, and generate programs $\program$ that adapt the model to perform well on $\task$. Importantly, we only measure the length of the final program $\program$ output by these strategies, and not the size of the procedure generating it.

Together, the length ($\targetkolmogorov$) and performance ($\targetacc$) of these programs $\program$ allow us to
estimate the points on a \emph{Pareto-optimal} length--performance $(\targetkolmogorov, \targetacc)$ curve, which will constitute our adaptability estimates.
We now describe three procedures for constructing programs, one for each view from \cref{superficiality}.


\paragraph{\circleddata Data methods.} 

\mymacro{\compresseddata}{\mathcal{D}'_c}
\mymacro{\trainprogram}{{\program_{\texttt{TRAIN}}}}
\mymacro{\adapterprogram}{{\program_{\texttt{ADAPT}}}}
\mymacro{\compressedprompt}{{\prompt_c}}
\mymacro{\decompressionprogram}{{\program_{\texttt{DECOMP}}}}

This procedure assumes access to a dataset $\dataset$ available for this task, and the pre-trained model $\model$.
It then draws a subset $\dataset' \subseteq \dataset$ of the data and trains the model on this data.
Before performing a gradient step on a target batch, however, it uses the model $\model$ to compress this batch via arithmetic coding \cite{deletanglanguage} and writes this compressed data $\compresseddata$ into program $\program$.
Program $\program$ then iteratively uses the model $\model$ to decompress the data, and to update the model, being able to reconstruct such a fine-tuning procedure.
Finally, the program then uses this fine-tuned model to perform inference on evaluation examples.
The pseudo-code for this resulting program $\program$ is shown in \Cref{fig:script-data}. 
The length of program $\program$ then mostly depends on the size of the compressed data $\length(\compresseddata)$, with a constant overhead for, e.g., the size of the code used to decompress and train on this data.
%
%
%
%
We term this procedure as \defn{Subset Training}.
The special case where the full dataset is used is labeled here as \defn{Full Dataset}, which we use as a baseline for this approach.

\begin{figure*}[ht]
    \centering
    \includegraphics[width=\linewidth]{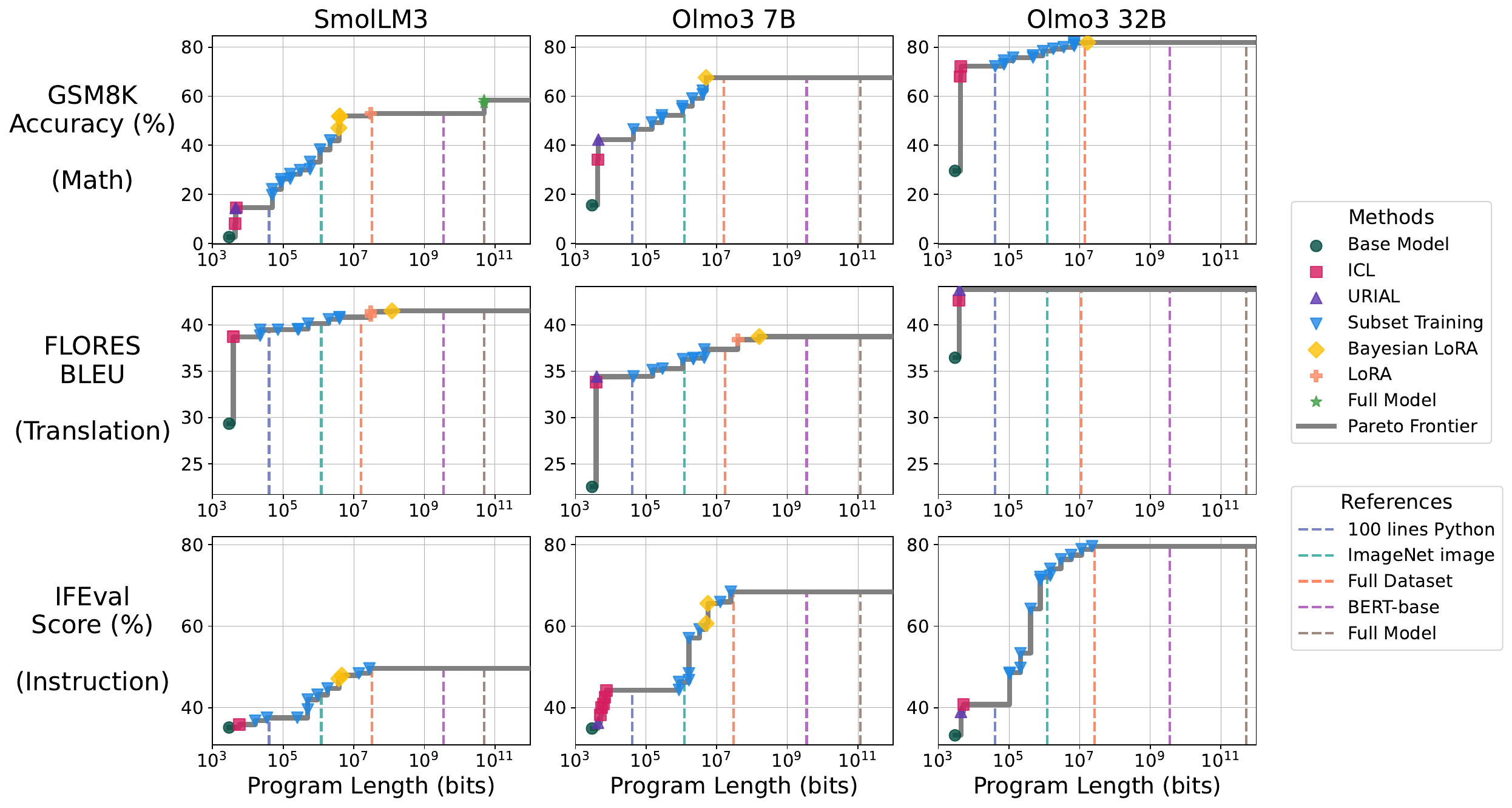}
    \caption{The program length vs.\ performance Pareto curves for SmolLM3 and Olmo3 in our three analysed tasks: mathematical reasoning, machine translation, and instruction following. We test each method described in \cref{sec:estimating-complexity}, but show only the optimal lengths and performances. 
    To provide some intuition about these program lengths, we also mark the size in bits of more commonly known references.} 
    \label{fig:pareto-curves} 
    
\end{figure*}




\paragraph{\circledparams Parametric methods.} 

This procedure again assumes access to a dataset $\dataset$ and the pre-trained model $\model$.
It then first randomly initialises the weights of an adapter $\adapterweights$ for model $\model$. 
These adapter weights $\adapterweights$ are then trained using the data in $\dataset$, and written into a program $\program$. 
The pseudo-code for these programs $\program$ is shown in \Cref{fig:script-param}. 
Notably, the program $\program$ directly encodes the adapter weights, which it uses to perform inference over the evaluation data.
The program $\program$'s length thus mostly depends on the size of these adapter parameters $\adapterweights$, with a constant overhead for the  size of the code to apply the adapter and run the model.
We use two variations of this procedure, which differ in how to design and train the adapter $\adapterweights$.
The first, \defn{LoRA} \cite{hu2022lora}, uses a number of low-rank adapter layers when fine-tuning a model.
The second, \defn{Bayesian-LoRA} \cite{meo2024bayesian},
is similar, but directly optimises the ranks (number of parameters) and quantisation levels (precision of each parameter) in each LoRA layer, allowing a model to be adapted with fewer bits.
We also consider adapting the full model as a baseline, which we label as \defn{Full Model}.\looseness=-1


\paragraph{\circledinference Inference-control methods.}
As before, this procedure assumes access to dataset $\dataset$ and model $\model$.
The procedure then draws a very small set of samples $\dataset' \subseteq \dataset$ and combines them into a prompt $\prompt$, which it compresses into $\compressedprompt$ using arithmetic coding (using model $\model$) and writes into a program $\program$. 
Program $\program$ then decodes $\prompt$ and adds it into the context of model $\model$ before running inference on the evaluation examples.
The pseudo-code for program $\program$ is shown in \Cref{fig:script-inference}. 
For this strategy, the length of program $\program$ mostly depends on the size of compressed prompt $\length(\compressedprompt)$, with a constant overhead for the the size of the code to decompress the prompt and run the model.
We experiment with two variations of this procedure.
First, \defn{In-Context Learning} \citep[ICL;][]{brown2020language}, where the prompt $\prompt$ contains examples from $\dataset'$.
Second, \defn{URIAL} \cite{lin2023unlocking}, where an explanation of the task $\task$ is added to prompt $\prompt$ in addition to the examples in $\dataset'$.
Finally, we also use an empty prompt $\prompt$ as a baseline, which we label as \defn{Base Model}.





\vspace{-0.05in}

\section{Experiments}
\label{experiments}

Our experiments estimate the length--performance tradeoff of the $(\targetkolmogorov, \targetacc)$-adaptability of real models $\model$ on real tasks $\task$.

\subsection{Experimental setup}






\paragraph{Models.}

As our models $\model$, we use SmolLM3 3B \cite{bakouch2025smollm3}, Olmo3 7B, and Olmo3 32B \cite{olmo2025olmo3}. These models provide a wide coverage of model size and are open-source, providing not only weights but also their training data and code. 
For most of our experiments, we will use as our pre-trained models $\model$, more specifically, the last checkpoint of these models' first training stage, i.e., 
the last checkpoint at \texttt{stage-1} of the SmolLM3 and the Olmo3 training pipelines, respectively.

\paragraph{Tasks.}

We select three diverse NLP tasks for our experiments, which have been extensively studied in prior work: Mathematical reasoning \citep[GSM8K;][]{cobbe2021training},\footnote{We draw training examples from MetaMath \cite{yu2023metamath}, which extends the training set of GSM8K.} English to French machine translation \citep[FLORES-200;][]{costa2022no}, and instruction following \citep[IFEval;][]{zhou2023instruction}. 
We evaluate performance using accuracy, BLEU, and verified success rate, respectively. 
Additional details about these tasks are provided in \Cref{app:tasks}.

\paragraph{Hyperparameters.} 

For every method described in \cref{sec:estimating-complexity}, we perform a hyperparameter sweep to obtain a set of length--performance points $(\targetkolmogorov, \targetacc)$. 
Based on these points, we construct a Pareto optimal curve by selecting all the $(\targetkolmogorov, \targetacc)$ pairs which are not dominated by any other, i.e., pairs for which no other pair performs at least as well in terms of both length and accuracy. We provide a more detailed description of these hyperparameter sweeps in \cref{app:hyperparameters}.


\section{Results}
\label{results}

\cref{fig:pareto-curves} shows our estimates of the Pareto curve between program length and task performance $(\targetkolmogorov, \targetacc)$ for the three analysed tasks and models. 
In the following, we discuss our main findings.

\subsection{Evidence for superficial alignment to complex tasks}
\vspace{0.03in}

It is possible to adapt pre-trained LLMs to achieve high performances using surprisingly short programs.
On GSM8K (first row of \cref{fig:pareto-curves}), all pre-trained models achieve low accuracy between 2.6\% and 29.6\%.
However, a program of only 4358 bits is sufficient to increase OLMo3-32B's accuracy to 72.2\%.
Similar trends hold for machine translation (second row) and instruction following (third row).
The pre-trained OLMo3-7B achieves 22.63 BLEU score on English to French translation, but can be adapted to achieve 34.43 BLEU with a program of only 3992 bits. 
On IFEval, maximum performance is achieved with all models with programs only slightly longer than $10^7$ bits (1.25MB).
While these pre-trained models used terabytes of data during pre-training (see \Cref{app:program-size-pretraining-posttraining}), adapting them to achieve high performance on these tasks can be done using only megabytes of information. We interpret these results as evidence in favour of the superficial alignment hypothesis.\looseness=-1


\vspace{0.03in}

\subsection{Different views of superficiality inform different regions of the Pareto curve}
\vspace{0.03in}

The estimated $(\targetkolmogorov, \targetacc)$ Pareto curves in \Cref{fig:pareto-curves} contain points from various methods corresponding to the different views of superficiality described in \cref{superficiality}. 
\Cref{fig:teaser} shows how each of these views contributes to a particular region of the Pareto curve (plots for other models and tasks are provided in \Cref{fig:all-pareto-view}).
At the shortest program lengths, we see that the Pareto curve is mostly informed by the \circledinference inference-control view. These methods (URIAL and ICL) bring modest gains in performance with minimal program size.
In the medium regimes of program lengths, we observe that the \circleddata data view dominates.
This method (Subset Training) improves performance over the the inference-control methods, but at the cost of encoding much more data in the program $\program$.
Finally, the longest programs always correspond to the \circledparams parameter view (LoRA and Bayesian-LoRA), whose methods directly encode parameters in $\program$.
While the three views supporting the SAH (\circleddata, \circledparams and \circledinference) seemed orthogonal at first, they are now thus directly compared on a common, quantitative footing. Instead of competing explanations, they are rather important and complementary techniques to estimate the length--performance $(\targetkolmogorov, \targetacc)$ Pareto curves.

\vspace{0.03in}

\subsection{Revisiting previous claims about superficiality}
\label{sec:revisiting}

In light of the results discussed above, we now revisit previous claims made about the superficiality of LLM adaptation.

\vspace{0.05in}

\paragraph{\citet{raghavendra2024revisitingsuperficialalignmenthypothesis}: Fine-tuning on few samples does not saturate performance.}


The original formulation of the SAH left unclear what `knowledge' and `capabilities' implied.
Both \citet{raghavendra2024revisitingsuperficialalignmenthypothesis} and \citet{lambert2025reinforcement} interpret it as implying that fine-tuning on few data points should quickly saturate performance on a task; they then found that this is generally not the case. 
We intentionally incorporated the target performance level $\targetacc$ in our definition of SAH (\cref{def:superficial}), as some tasks may require arbitrarily large amounts of information to achieve particularly high performance levels.
For example, Olmo3 32B can be adapted to achieve 78.4\% in GSM8K with a program as small as a single ImageNet image. 
However, we are not able to substantially improve upon these results even in our largest programs, suggesting that perhaps considerably more information is needed in order to saturate the task.

\vspace{0.05in}

\paragraph{\citet{liu2024bitdelta}: Fine-tuning updates can be worth as little as 1 bit per parameter.}

\begin{figure}[t]
    \centering
    \includegraphics[width=0.95\columnwidth]{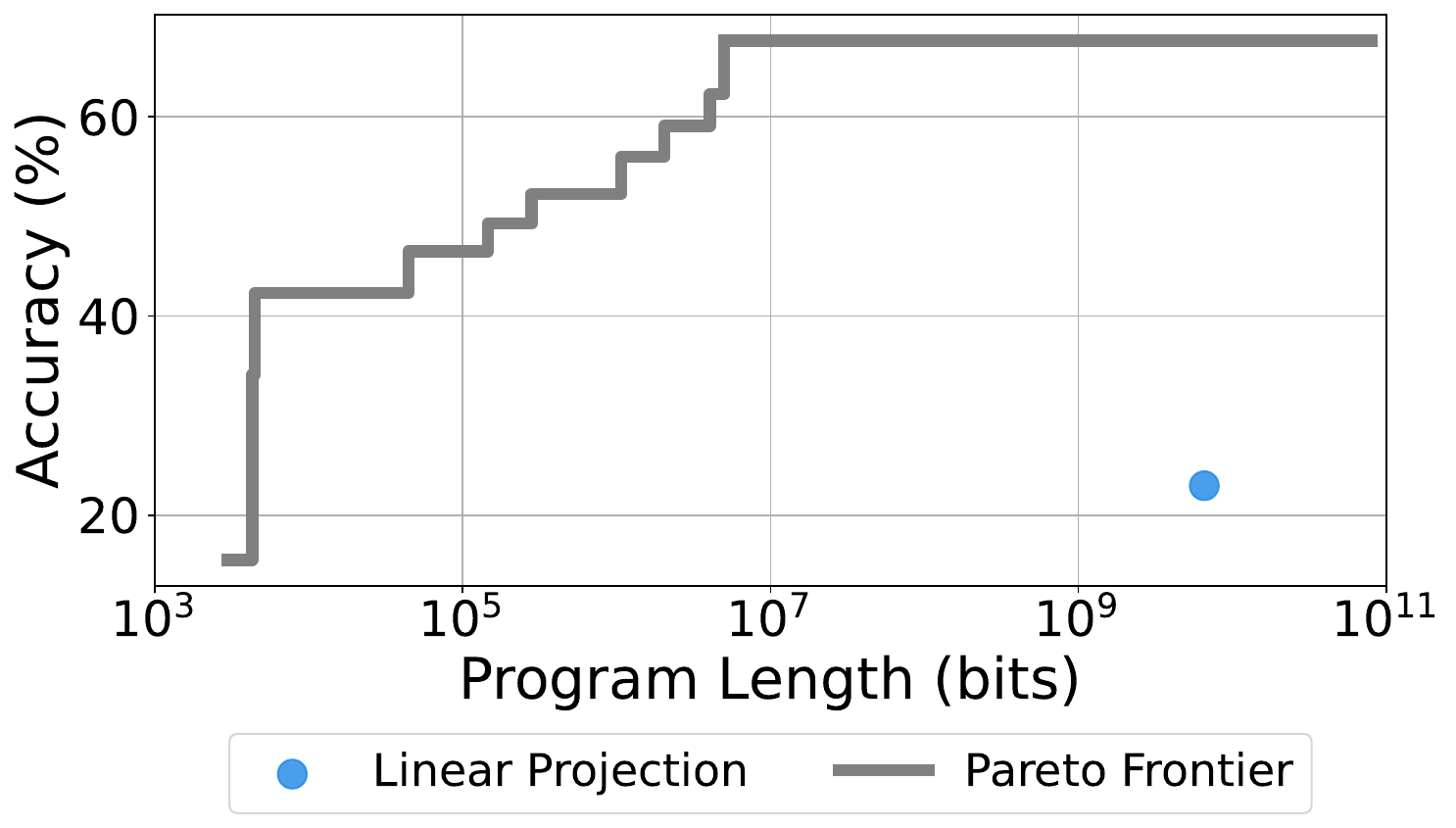}
    \caption{Performance of the linear projection fine-tuning proposed by \citet{chen2025extracting} in comparison to our estimated Pareto curve. The performance of the method is far from the Pareto curve.}
    \label{fig:linear-projection}
\vspace{0.03in}
    
\end{figure}

There is a growing interest in compressing the weight difference between pre-trained and fine-tuned model, which is sometimes termed delta compression \cite{ping2024delta, huang2025dynamic, yadav2023compeft}. 
They are similarly motivated by the intuition that fine-tuning adds less information to a model than pre-training, and therefore it may be possible to compress this update. 
\citet{liu2024bitdelta}, for example, compress the difference between a pre-trained and fine-tuned model to approximately 1 bit per parameter, with minimal performance impacts on their analysed tasks. 
%
They interpret these results as indicative of the low amount of information gained during fine-tuning. 
Under our framework, these results understate the amount of information that fine-tuning adds to a model. 
For example, the size of a full fine-tuning dataset is much smaller than the size of 1-bit per parameter of an LLM.
Comparing our smallest model (SmolLM3) to our largest dataset (GSM8K), the dataset size is $3\times10^7$ bits while the model has $3 \times 10^{9}$ parameters.
These methods, thus, would produce programs $\program$ that are neither small nor performant enough to be Pareto optimal under our operationalisations.

\vspace{0.1in}

\paragraph{\citet{chen2025extracting}: Fine-tuning on reasoning tasks is not superficial.}
\label{sub:superficial-knowledge}

\citet{chen2025extracting} critique the SAH for not rigorously defining what `superficial knowledge' means. 
They then define superficial knowledge as the performance a pre-trained model can achieve when only adapting the linear projection of the output layer. 
This is based on the intuition that if learning a linear transformation on top of a pre-trained model is sufficient for performing well on a task, then most of the knowledge required for the task is already present in the pre-trained model itself. 
Empirically, they show that reasoning tasks such as GSM8K require more than superficial knowledge, because fine-tuning only the output layer is not sufficient for achieving good performance.
Our framework reveals that their approach might not be optimal for finding short adaptation programs. 
In \cref{fig:linear-projection}, we show the length--performance of their method compared to the Pareto curve when adapting OLMo3-7B to GSM8K.
This figure illustrates that training only the output layer leads to neither short programs nor high accuracies (see similar results for other tasks in \Cref{fig:lm-head-appendix} in \cref{app:extra_results}).
More broadly, \citeauthor{chen2025extracting}'s approach is closely related to probing, and---as argued by prior work in that field \citep{hewitt-liang-2019-designing,pimentel2020information}---we see no reason why `superficiality' should be restricted to linearly encoded knowledge.\looseness=-1

\section{How Does Pre- and Post-training Affect the Superficiality of Adaptation?}
\label{sec:analysis}

Finally, to demonstrate the usefulness of our definitions beyond revisiting previous claims about superficiality, we now show that 
$(\targetkolmogorov, \targetacc)$ Pareto curves can shed light on how post-training affects the superficiality of adaptation.

\paragraph{Experimental setup.} 
We compute $(\targetkolmogorov, \targetacc)$ Pareto curves for mathematical reasoning (GSM8K) and instruction following (IFEval) using randomly initialised models as well as the final pre- and post-training\footnote{See \Cref{app:program-size-pretraining-posttraining} for details on the data used for pre- and post-training these models.} checkpoints of SmolLM3 3B and Olmo3 7B (results for two additional tasks are available in \Cref{fig:task-complexity-flores} of \cref{app:extra_results}).
We use the same evaluation setup, including the same minimal prompting templates, for all these checkpoints.
\cref{fig:checkpoints} shows the Pareto curves obtained when running our methods on these models $\model$.



\paragraph{Pre-training makes strong performance accessible.}

As expected, in this figure, we see that pre-training enables the model to reach much higher task performance compared to randomly initialised checkpoints.
For example, in the case of Olmo3-7B, the best performance we are able to achieve on mathematical reasoning (GSM8K) improves from 1.1\% to 67.6\%, and on instruction following (IFEval) from 26.6\% to 68.5\%.
Further, this figure also shows that the allowed program size significantly affects the performance of pre-trained checkpoints. 
For example, with programs of $10^5$ bits, the performance obtained in each setting is still far below its maximum performance. 
Finally, at the maximum achieved performance levels $\targetacc$, the task complexity is relatively high; most evidently in SmolLM3 on GSM8K, performance only plateaus at $5 \times 10^{10}$ bits (around 6.2 GB).

\begin{figure}[t]
    \centering
\includegraphics[width=\linewidth]{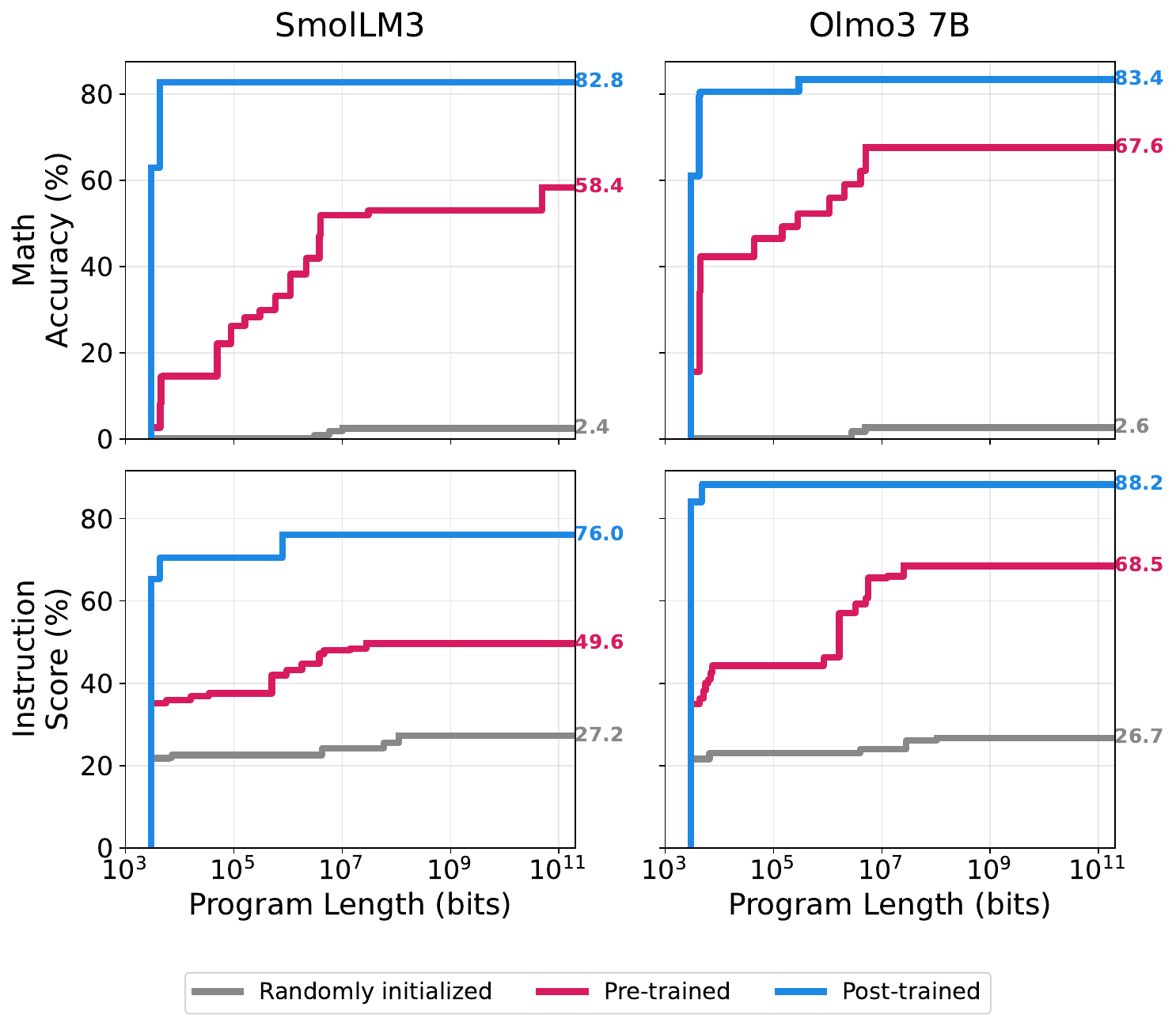}
    \caption{Estimated complexity curves conditioned on \textcolor{pretraining}{pre-} and \textcolor{posttraining}{post-}trained checkpoints of SmolLM3 and Olmo3 for math (GSM8K) and instruction (IFEval) tasks. \textcolor{pretraining}{Pre-training} allows access of strong performance in these tasks. \textcolor{posttraining}{Post-training} improves these performances, but more importantly, it greatly reduces the complexity to reach such performance.\looseness=-1}
    \label{fig:checkpoints}
\end{figure}

\paragraph{Post-training collapses the complexity of achieving strong performance.}

As expected, post-training further increases task performance, e.g., from 58.4\% to 82.8\% in mathematical reasoning and from 49.6\% to 70.5\% for instruction following for SmolLM3.\footnote{ 
We note that the training data for both GSM8K and IFEval were included in the post-training of both SmolLM3 and Olmo3 (see \Cref{app:program-size-pretraining-posttraining}), which could partially explain this effect. We additionally experiment on FLORES and e-SNLI \cite{esnli} to decouple this effect (\cref{fig:task-complexity-flores}).}
More importantly, however, post-training collapses the complexity of achieving the maximum performance.
For example, for pre-trained Olmo3-7B the complexity of achieving its maximum performance is $5\times 10^6$ bits for GSM8K, and $2.5\times 10^7$ bits for IFEval. 
In contrast, for post-trained models, program size has minimal effect on performance beyond a size of $10^4$ bits.
These results provide an information-theoretic interpretation of what it means for post-training to resurface what was learned in pre-training: the complexity of accessing strong performance is substantially reduced.

\section{Conclusion}

The superficial alignment hypothesis has lacked a precise definition, which led both to seemingly orthogonal interpretations of how to support this hypothesis and to important critiques against it.
We address this here by defining task complexity---the length of the shortest program that achieves a target performance on a task---which we use to formalise the SAH as claiming that access to a pre-trained model drastically reduces the complexity of achieving high performance on many tasks that would otherwise be highly complex.
Our framework unifies previous arguments in support of the SAH (the data view, the parametric view, and the inference control view, see \cref{superficiality}) as proposing different strategies for finding short, high-performance programs for a target task.
Experimentally, we estimate the task complexity of mathematical reasoning, machine translation, and instruction following, conditioning these complexities on three different models. 
We find that programs as small as a few kilobytes can already adapt pre-trained models to achieve strong performance on these tasks.
By analysing intermediate training checkpoints of LLMs, we show that pre-training enables access to strong performance, though often at high complexity (requiring megabytes to gigabytes long programs), while post-training dramatically collapses this complexity, making high performances accessible with less than a kilobyte.
We believe this to be an important step towards understanding the concept of superficiality in LLMs, and the information contained in them.

\section{Limitations and Future Work}

Our operationalisation of the SAH inherits fundamental properties of algorithmic information theory that shape and limit what claims we can make.
First, because task complexity is uncomputable (\Cref{lemma:uncomputability}), our empirical estimates provide only upper bounds.
Therefore, our $(\targetkolmogorov, \targetacc)$ Pareto curves show which performance levels are achievable with different program lengths, but they are not provably tight, as there may exist shorter programs we have not discovered.
Second, while we can reasonably upper-bound the conditional task complexity $\taskcomplexity(\task_\targetacc \mid \model)$ using the three views on superficiality, estimating the task complexity $\taskcomplexity(\task_\targetacc)$ is difficult, as it requires finding the shortest program to solve $\task_\targetacc$ without access to any pre-trained model.
This prevents us from directly measuring the information $\information(\task_\targetacc; \model)$ (see \Cref{def:information}), which is why our formalisation of the SAH (\Cref{sah}) relies solely on conditional complexity.
We hope future work can explore the tightness of our approximations.\looseness=-1

Beyond the SAH, task complexity may offer useful perspectives on other problems in ML.
For instance, work on quantifying the safety of open-weight model releases \citep{wallace2025estimating, che2025model} deals with measuring how easily unsafe capabilities can be accessed.
Our framework naturally operationalises this as $\information(\task_\targetacc; \model)$ for some unsafe task $\task$ and performance $\targetacc$: if low, releasing the model weights does not make $\task_\targetacc$ easier to solve.

\section*{Impact Statement}
This paper presents work whose goal is to advance the field of machine learning. There are many potential societal consequences of our work, none of which we feel must be specifically highlighted here.

\subsection*{Acknowledgements}
We would like to thank Michael Rizvi-Martel, Verna Dankers, Nicholas Meade, Benno Krojer and Sebastian Bordt for important discussions and feedback during this project.
We would also like to thank Philip Whittington and Konrad Szafer for feedback on this manuscript.
This research was enabled in part by compute resources provided by Calcul Québec (calculquebec.ca), the
Digital Research Alliance of Canada (alliancecan.ca), and software and technical help provided by Mila (mila.quebec). 
IT acknowledges support from Dutch National Science Foundation (NWO Vici grant VI.C.212.053) and IVADO.

\bibliographystyle{util/icml2026}
\bibliography{example_paper}
\appendix
\onecolumn

\section{Theoretical Properties of Task Complexity}
\label{app:theory}

The definition of Kolmogorov complexity is commonly attributed to \citet{kolmogorov1965three}, but was independently described by \citet{solomonoff1964formal} and \citet{chaitin1966length}, and is given in \cref{eq:kolmogorov}.
In this section, we list a number of properties of task complexity which are closely related to the notion of Kolmogorov complexity. 
First, we present three useful bounds on conditional and unconditional task complexity.

\begin{remark} For any task $\task$ and performance $\targetacc$, conditioning on a model $\model$ can only reduce task complexity:
\begin{align}
    \taskcomplexity(\task_\targetacc \mid \model) \leq \taskcomplexity(\task_\targetacc)
\end{align}
\end{remark}

\begin{proof}
Let $\program^\star$ be a shortest (input-free) program achieving $\faccuracy(\program^\star) \geq \targetacc$, so $\taskcomplexity(\task_\targetacc) = \length(\program^\star)$.
We can augment this program $\program^\star$ with an ignored input $\model$, and it would still solve task $\task$ at performance $\targetacc$.\footnote{Depending on the UTM formalisation, ignoring extra inputs may require an additional $O(1)$ term in the inequality.}
\end{proof}

\begin{remark}
For any task $\task$, performance $\targetacc$ and model $\model$, (unconditional) task complexity is upper bounded by the task complexity conditioned on $\model$ plus an overhead for encoding $\model$:
\begin{align}
    \taskcomplexity(\task_\targetacc) \leq \taskcomplexity(\task_\targetacc \mid \model) + \kolmogorov(\model) + O(1)
\end{align}
\end{remark}
\begin{proof}
Let $\program^\star$ be a shortest program which, conditioned on $\model$, solves $\task_\targetacc$, i.e., $\faccuracy(\program^\star_{\model}) \geq \targetacc$.
We thus have that $\taskcomplexity(\task_\targetacc \mid \model) = \length(\program^\star)$. 
We can construct a program $\program'$ that directly encodes $\model$ (which requires $\kolmogorov(\model)$ bits), which it uses to execute $\program^\star$ (requiring $\length(\program^\star)$ bits plus an $O(1)$ overhead). Thus:
\begin{subequations}
\begin{align}
    \taskcomplexity(\task_\targetacc) \leq \length(\program')
    & = \kolmogorov(\model) + \length(\program^\star) + O(1) \\
    &= \kolmogorov(\model) + \taskcomplexity(\task_\targetacc \mid \model) + O(1)
\end{align}
\end{subequations}
\end{proof}

\begin{remark}
\label{remark:monotonicity}
Task complexity is monotonic on the performance $\targetacc$, meaning that,
for any task $\task$ and performances $\targetacc_1 \leq \targetacc_2$:
\begin{align}
    \taskcomplexity(\task_\targetacc_1) \leq \taskcomplexity(\task_\targetacc_2)
\end{align}
with the same holding for conditional task complexity.
\end{remark}

\begin{proof}
Any program $\program$ that achieves performance $\targetacc_2$ also achieves performance $ \targetacc_1$. 
Its length is therefore an upper bound on $\taskcomplexity(\task_\targetacc_1)$.
\end{proof}

\begin{theorem}
\label{theorem:invariance}
Task complexity is invariant up to an additive constant to the choice of universal Turing Machine $\machine$. In other words, for any two universal Turing Machines $\machine_1$ and $\machine_2$, there exists a value $\conversion$ such that for any task $\task$ and performance $\targetacc$:\looseness=-1
\begin{align}
| \taskcomplexity_{\machine_1}(\task_\targetacc) - \taskcomplexity_{\machine_2}(\task_\targetacc) | \leq \conversion
\end{align}

where $\taskcomplexity_{\machine}$ is the task complexity using machine $\machine$, and $\conversion$ depends exclusively on the machines, but not on the task.
\end{theorem}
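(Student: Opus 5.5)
The plan is to follow the classical invariance theorem for Kolmogorov complexity, adapted to programs that take an input $\inputvalue$. Fix universal Turing machines $\machine_1$ and $\machine_2$. Because $\machine_1$ is universal, there is a fixed interpreter prefix $c_{2\to 1}$ that simulates $\machine_2$ on $\machine_1$. Concretely, for every program $\program$ and every input $\inputvalue$ we have $\machine_1(c_{2\to 1}\program, \inputvalue) = \machine_2(\program, \inputvalue)$, with both sides halting or diverging together. Symmetrically, there is a fixed prefix $c_{1\to 2}$ that simulates $\machine_1$ on $\machine_2$. Neither prefix depends on $\program$, $\inputvalue$, or the task. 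I will set $\conversion = \max\{\length(c_{2\to 1}), \length(c_{1\to 2})\}$, possibly adding a constant for self-delimiting the prefix; this depends only on the two machines.

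The first step is to bound $\taskcomplexity_{\machine_1}(\task_\targetacc)$ from above. Let $\program^\star$ be a shortest program on $\machine_2$ with $\faccuracy(\program^\star) \geq \targetacc$, so that $\length(\program^\star) = \taskcomplexity_{\machine_2}(\task_\targetacc)$. By the simulation identity, the program $c_{2\to 1}\program^\star$, run on $\machine_1$, produces exactly the same output as $\program^\star$ on $\machine_2$ for every input $\inputvalue$. Since the score is an expectation over $\inputvalue \sim \inputsdist$ of $\scoringfn(\inputvalue, \program(\inputvalue))$, and the two programs compute the same function, their scores are equal. This is the key observation: the score depends only on the input–output function the program computes, not on how the program is written. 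Therefore $c_{2\to 1}\program^\star$ also reaches score at least $\targetacc$, and we get $\taskcomplexity_{\machine_1}(\task_\targetacc) \leq \taskcomplexity_{\machine_2}(\task_\targetacc) + \length(c_{2\to 1})$.

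The second step is the same argument with the roles of the machines swapped, which gives the reverse inequality. Combining the two bounds yields $|\taskcomplexity_{\machine_1}(\task_\targetacc) - \taskcomplexity_{\machine_2}(\task_\targetacc)| \leq \conversion$, uniformly over all tasks $\task$ and all performance levels $\targetacc$.

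Three points need care.

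\begin{itemize}
\item If no program on $\machine_2$ reaches $\targetacc$, the set being minimised is empty. By the same simulation argument, it is then empty on $\machine_1$ as well, so both complexities equal $\infty$ and the statement holds under the usual convention.
\item If a program fails to halt on some inputs, the identical-behaviour property means both programs fail on exactly the same inputs. Whatever convention the score uses for undefined outputs therefore applies equally to both.
\item The main obstacle is making the concatenation $c\program$ well defined. The prefix must be separable from $\program$, which requires a prefix-free (self-delimiting) encoding, as in the standard invariance theorem. I would take this directly from the classical result \cite{li2008introduction} rather than re-derive it. The same argument carries over verbatim to the conditional version, treating $\model$ as part of the input.
\end{itemize}
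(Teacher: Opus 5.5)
Your proposal is correct and takes essentially the same approach as the paper. Both arguments use universality to get a fixed simulator of $\machine_2$ on $\machine_1$ (your prefix $c_{2\to 1}$; in the paper, $\program_{sim}$ run on the self-delimiting pair $\langle \program, \inputvalue \rangle$), note that the simulating program computes the same input--output map and therefore has the same score, and then symmetrise to obtain $\conversion$ as the maximum of the two simulation costs. Your handling of the empty-feasible-set and non-halting cases is a small addition the paper does not spell out.
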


 \begin{proof}
  We first show that $\taskcomplexity_{\machine_1}(\task_\targetacc) \leq \taskcomplexity_{\machine_2}(\task_\targetacc) + \conversion_{1,2}$ for some $\conversion_{1,2}$ depending only on $\machine_1$ and $\machine_2$.
  As $\machine_1$ is a universal turing machine, there exists a program $\program_{sim}$ that simulates $\machine_2$ on $\machine_1$. That is, for any program $\program$ and input $\inputvalue$:
  \begin{align}
  \machine_1(\program_{sim}, \langle \program, \inputvalue \rangle) = \machine_2(\program, \inputvalue),
  \end{align}
where $\langle \program, \inputvalue \rangle = \bar{\program}\, \inputvalue$, with $\bar{\program}$ a self-delimiting encoding of $\program$, so that both $\langle \cdot, \cdot \rangle$ and its inverse are computable \cite{li2008introduction}.

  Now, let $\program^\star$ be a shortest program solving $\task_\targetacc$ on $\machine_2$, so $\length(\program^\star) = \taskcomplexity_{\machine_2}(\task_\targetacc)$. We construct a program $\program'$ for $\machine_1$ that, on input $\inputvalue$, forms $\langle \program^\star, \inputvalue \rangle$ and runs $\program_{sim}$ on it. By construction, $\program'$
  computes the same outputs as $\program^\star$ does on $\machine_2$, so it solves $\task_\targetacc$. Its length is $\length(\program_{sim}) + \length(\program^\star) + O(1)$, where the $O(1)$ term accounts for the encoding. Since this upper bounds $\taskcomplexity_{\machine_1}(\task_\targetacc)$, we obtain:
  \begin{align}
  \taskcomplexity_{\machine_1}(\task_\targetacc) \leq \taskcomplexity_{\machine_2}(\task_\targetacc) + \conversion_{1,2}
  \end{align}
  with $\conversion_{1,2} = \length(\program_{sim}) + O(1)$.

  By symmetry of $\machine_1$ and $\machine_2$, we also have $\taskcomplexity_{\machine_2}(\task_\targetacc) \leq \taskcomplexity_{\machine_1}(\task_\targetacc) + \conversion_{2,1}$. Setting $\conversion = \max(\conversion_{1,2}, \conversion_{2,1})$ gives the result.
  \end{proof}

We now show that task complexity generalises Kolmogorov complexity, having it as a special case.

\mymacro{\mystring}{s}
\begin{remark}
\label{remark:kolmogorov-special-case}
Task complexity generalises Kolmogorov complexity.
\end{remark}
\begin{proof}
Let $\mystring$ be any finite string. 
We write its Kolmogorov complexity as $\taskcomplexity(\mystring)$.
Now, let $\targetacc = 1$ and construct a task $\task$ with no inputs, and whose goal is solely to output this string $\mystring$. Formally:
$\inputspace = \{\varepsilon\}$,
$\outputspace = \bitspace$,
$\inputsdist(\varepsilon) = 1$, and
$\scoringfn(\inputvalue, \outputvalue)$ returns 1 if and only if $\outputvalue = \mystring$ and 0 otherwise.
Then we have exactly:
        $\taskcomplexity(\task_\targetacc) = \taskcomplexity(\mystring)$.
\end{proof}

As task complexity reduces to Kolmogorov complexity in a special case, it is easy to show it is both uncomputable and unbounded.
Uncomputability means there does not exist a program such that, for all tasks $\task$ and all performances $\targetacc$, it computes $\taskcomplexity(\task_\targetacc)$.
Unboundedness means that there does not exist any constant $k$ which is larger than all task complexities.

\begin{corollary}
\label{lemma:uncomputability}
Task complexity is uncomputable. 
\end{corollary}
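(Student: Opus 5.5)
The plan is a reduction to the uncomputability of Kolmogorov complexity via \Cref{remark:kolmogorov-special-case}. I will argue by contradiction. Suppose there were a program $\program_{\mathtt{TC}}$ that, given a (suitably encoded) task $\task$ and performance $\targetacc$, outputs $\taskcomplexity(\task_\targetacc)$. Then I will build a program that computes Kolmogorov complexity $\taskcomplexity(\mystring)$ for every finite string $\mystring$, which contradicts the classical result that Kolmogorov complexity is uncomputable \cite{li2008introduction}.

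The key steps are as follows. First, fix an encoding of tasks. The construction in \Cref{remark:kolmogorov-special-case} has $\inputspace=\{\varepsilon\}$, $\outputspace=\bitspace$, $\inputsdist(\varepsilon)=1$, and $\scoringfn$ the indicator of $\outputvalue=\mystring$. This task is specified by a finite object: a computable scoring function parameterised by $\mystring$, together with a point-mass distribution. So the map $\mystring \mapsto (\task^{(\mystring)}, \targetacc=1)$ is computable. Second, compose this map with $\program_{\mathtt{TC}}$. By \Cref{remark:kolmogorov-special-case} the composite returns exactly $\taskcomplexity(\task^{(\mystring)}_1)=\taskcomplexity(\mystring)$, so it is a program computing Kolmogorov complexity. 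Third, invoke the classical theorem to reach the contradiction. For self-containment, the Berry-paradox argument can be sketched: enumerate strings until one with $\taskcomplexity(\mystring)\ge n$ is found. This yields a program of length $\log n + O(1)$ that outputs a string of complexity at least $n$, which is impossible for large $n$.

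The main obstacle is the first step: making precise what it means for a program to take a task as input, since tasks involve possibly infinite spaces, distributions and real-valued scores. I will handle this by noting that uncomputability only requires failure on \emph{some} computably presented family of tasks. It is enough to restrict to tasks given by descriptions of computable $\inputsdist$ and $\scoringfn$, and in fact to the singleton-input indicator tasks above. Any reasonable encoding must include these, and the corollary's quantifier (``for all tasks $\task$ and all performances $\targetacc$'') makes the restriction legitimate. One further detail is that \Cref{remark:kolmogorov-special-case} uses an input-free program while task complexity programs receive the input $\varepsilon$. This is at most an $O(1)$ discrepancy, and it vanishes under the paper's convention that $\program(\varepsilon)$ is simply $\program$ run with empty input.
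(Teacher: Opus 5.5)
Your proposal is correct and takes the paper's route: the paper gets the corollary in one line from \Cref{remark:kolmogorov-special-case} together with the classical uncomputability of Kolmogorov complexity. Your computable map $s \mapsto (\task^{(s)}, 1)$ and the Berry-paradox sketch just spell that reduction out, and the sketch would still go through even if the input-free versus input-$\varepsilon$ discrepancy were a nonzero bounded constant, since no computable function approximates Kolmogorov complexity within a bounded additive error.
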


\begin{corollary}
For any $k \in \mathbb{N}$, there exists a task $\task$ and performance $\targetacc$ such that $\taskcomplexity(\task_\targetacc) \geq k$.
\end{corollary}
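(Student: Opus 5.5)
The plan is to reduce to the classical unboundedness of Kolmogorov complexity through the special case in \Cref{remark:kolmogorov-special-case}. Given $k \in \mathbb{N}$, I first show that some finite bit-string $\mystring$ has $\taskcomplexity(\mystring) \geq k$. This is a counting argument. There are $2^k$ strings of length $k$. There are only $\sum_{i=0}^{k-1} 2^i = 2^k - 1$ programs of length strictly less than $k$, and each program, run on $\machine$ with no input, outputs at most one string. So at least one string $\mystring$ of length $k$ is not the output of any program shorter than $k$, which gives $\taskcomplexity(\mystring) \geq k$.

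Next I build the task from \Cref{remark:kolmogorov-special-case} for this $\mystring$. Take $\inputspace = \{\varepsilon\}$, $\outputspace = \bitspace$, $\inputsdist(\varepsilon)=1$, and let $\scoringfn(\varepsilon,\outputvalue)=1$ if $\outputvalue=\mystring$ and $0$ otherwise. Set $\targetacc=1$. Then $\faccuracy(\program) = \scoringfn(\varepsilon, \program(\varepsilon))$, which is at least $1$ exactly when $\program(\varepsilon)=\mystring$. Hence $\taskcomplexity(\task_1) = \taskcomplexity(\mystring) \geq k$, as required.

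The only point needing care is how a program that takes the empty input $\varepsilon$ relates to an input-free program. If the remark's exact equality is taken as given, there is nothing left to check. If the machine formalisation adds an $O(1)$ discrepancy, I would absorb it by instead picking a string of length $k + c$, where $c$ is that constant. The counting argument works for every length, so this costs nothing.

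Finally, note that the same construction shows $\taskcomplexity(\task_1)$ is finite. The program that simply prints $\mystring$ solves the task, so the minimum in the definition is attained and the bound $\taskcomplexity(\task_1) \geq k$ is a statement about a well-defined number.
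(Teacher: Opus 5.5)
Your proposal is correct and takes the paper's route: the paper gives no separate proof and treats this corollary as immediate from \Cref{remark:kolmogorov-special-case} together with the classical unboundedness of Kolmogorov complexity, which you make explicit with the standard counting argument. Your $O(1)$ caveat is handled correctly, and you could avoid it entirely by running the same counting argument directly on programs evaluated at the input $\varepsilon$.
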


\section{How to compute $\length(\boldsymbol{\program})$}
\label{app:size}

Our definitions in \cref{theory} rely on a Universal Turing Machine $\machine$ and binary-string programs $\program$; the size of program $\program$ is then simply its length in bits. 
In practice, our operationalisation into experiments (\cref{experiments}) relies on \texttt{python} being the machine $\machine$ and on multiple files representing parts of the program and specific information (about the program's data or adapted parameters) which it loads. 
For both the data and inference-control methods, thus, our program $\program$'s length is thus the size in bits of the actually used Python script, plus the size of a separate file which stores its required data.
For the parameter methods, our program $\program$'s length is the size in bits of the used Python script, plus the size of a separate file which stores the saved adapter parameters. The implementation of all of our measurements are in our \href{https://github.com/tvergara/sah}{repository}.

\paragraph{The data file size.}
We encode any data used in our methods via arithmetic coding with the model $\model$, following \citet{deletanglanguage}. 
This is an efficient method to losslessly compress and decompress data if given access to an LLM. 
To speed up our experiments, we estimate the size of this encoded data as the negative log-likelihood of the forward pass of model $\model$ on the data, as similarly done by prior work measuring the size in bits of data \cite{morris2025much, pezeshkpour2023measuring}.

\paragraph{The weights file size.}
We always train our models using 16 bit accuracy, and therefore we compute the size of the weights as 16 times the number of trainable parameters of the adapter.
For Bayesian-LoRA, we compute the size of each adapter matrix as $q\times (r\times(d_{in} + d_{out}))$ where $q$ is the learned quantisation level, $r$ is the learned rank, and $d_{in}$ and $d_{out}$ are the dimensions for the matrix being adapted. Note that $r\times(d_{in} + d_{out})$ is the number of trained adapter parameters.\footnote{An additional strategy for optimizing the size of the weights is to compress them with \texttt{zip}. We experimented with compressing SmolLM3, Olmo3 7B, and Olmo3 32B, getting reductions of 20.7\%, 20.5\%, and 20.6\%, respectively. However, this resulted in an increase of wall clock time for our experiments by up to 2 hours for each compression. We excluded this strategy from our results.}

\paragraph{The Python script file size.}
Given some python file, e.g. a \texttt{main.py}, we can measure its storage size in bytes as \texttt{wc -c main.py}. 
However, this does not consider the size of the code to run any of its dependencies, e.g. \texttt{import transformers}. 
We will assume that any library (built-in or not) is part of the machine $\machine$, and therefore ignore its measurements. 
This rule could be abused, for example, by creating a library that contains all of the code for a program $\program$ of our experiments. 
We do not fall into this behaviour. In our experiments, the most niche and specific python library which we assume access to is \href{https://github.com/jxmorris12/gptzip}{gptzip}, which implements the encoding of data via arithmetic coding with LLMs.
The methods Base Model, ICL, URIAL, and Subset Training, LoRA, Bayesian-LoRA, and Full Model have, respectively, sizes $2952$, $3704$, $3704$, $5704$, $2832$, $8376$, $2080$ in bits. These contributions mostly affect Base Model, ICL, and URIAL \cref{fig:distribution-program-length}.

\section{Tasks}
\label{app:tasks}

We are motivated to select tasks that: (i) cover a diverse set of NLP domains, and (ii) plausibly have high complexity $\taskcomplexity(\task_\targetacc)$ for some levels of performance. 
We select the following three tasks.
We note that, to quantify the performance $\faccuracy(\program)$ in each task, we use sample averages over finite test sets as an approximation to the expectation over the task's distribution $\inputsdist$.

\paragraph{Mathematical Reasoning.} For this task, we use the GSM8K dataset \cite{cobbe2021training}. This dataset contains grade-school math word problems requiring multi-step arithmetic reasoning. We use MetaMath \cite{yu2023metamath}, which is an extended version of the training set, containing 395k examples. We measure the performance as accuracy of the final response.\looseness=-1

\paragraph{Machine Translation.} For this task, we use the English to French portion of the FLORES-200 dataset \cite{costa2022no}. 
We draw the first 100k examples from the training set, and use BLEU \cite{papineni2002bleu} as the performance metric.\looseness=-1

\paragraph{Instruction Following.} For this task, we use the IFEval dataset \cite{zhou2023instruction}.
The dataset contains queries as \textit{`Explain concept X'} augmented with easily verifiable rules as \textit{`use only lowercase'} or \textit{`do not use commas'}. 
We measure performance as the average success of following these rules, and we verify each rule using the official implementation of the dataset. 
We generate a training set of 37k examples, using the Tulu 3 data \cite{lambert2024tulu}, which we detail in the next section.\looseness=-1

\subsection{Generating Instruction Following Data}
\label{app:generating-instruction}

The original work about the superficiality of LLM adaptation focused on adapting them to follow instructions, and therefore, we found it important to add this to our set pf analysed tasks. Evaluating instruction following behaviour is not necessarily an easy task, and previous work relied on the use of expensive human annotations or LLM judges \cite{zhou2023lima, hewitt2024instruction, lin2023unlocking}. Rather, we opted for using a simpler and more reliable evaluation method: IFEval \cite{zhou2023instruction}, which implements `verifiable instructions'. However, IFEval does not provide a training set, which is the reason why we have to create our own synthetic data for training.
We start with the training data for IFEval created in Tulu \cite{lambert2024tulu}, consisting of 15k examples formatted for IFEval with its verifiable rules. However, these datapoints are designed for reinforcement learning with verifiable rewards (RLVR), and therefore do not provide a reference on how to solve the instructions. We use a model trained via RLVR on this dataset, Olmo3 32B Thinking, to generate 10 candidate solutions for every datapoint in Tulu; we then filter these to keep only the correct ones, and remove duplicate answers. Our final training set contains 37k datapoints.

\section{Hyperparameters}
\label{app:hyperparameters}

In this section, we provide a short description of the parameters we perform a search over in our experiments, when trying to find Pareto optimal length--performance pairs $(\targetkolmogorov, \targetacc)$.
The full results of our hyperparameter sweeep is in \Cref{fig:pareto-estimation-hyperparam}.
The hyperparameters used are:
\paragraph{For ICL.} We sweep over the number of examples $n$ used in the in-context learning experiments. We increase this number until we run into memory limits on a single 80GB GPU.

\paragraph{For URIAL.} We perform no sweep for this method. We run our experiments with a hand-designed prompt to explain the task, and we run this method using $n=2$ examples of the data.

\paragraph{For Subset Training.} We again sweep this method  over the number of examples $n$ used. We experiment with subsets of the data with sizes in powers of $2$, starting with $n=8$ and finishing with $n=32768$ examples. We additionally experiment with fine-tuning the model using the full dataset. For each experiment, we also use learning rates $10^{-4}$ and $10^{-5}$.

\paragraph{For LoRA.} We use learning rates $10^{-4}$ and $10^{-5}$, always training with the full dataset.

\paragraph{For B-LoRA.} We use scale learning rates of $10^{-2}$ and $2\times 10^{-3}$, training on the full dataset. We do one experiment with rank $1$ for every LoRA layer and no rank pruning, and another starting with rank $2$, but pruning the rank according to the optimisation objective of the method \cite{meo2024bayesian}.

\paragraph{For Full Model.} We use learning rates $10^{-5}$ and $10^{-6}$, training on the full dataset. Each parameter is stored at $16$-bit precision.

\section{The Data used in Pre- and Post-training}
\label{app:program-size-pretraining-posttraining}

Here, we detail the data used to pre- and post-train our analysed models.

\paragraph{SmolLM3.}
\citet {bakouch2025smollm3} report using 8 trillion tokens during the \texttt{stage-1} of training (i.e., for pre-training). Assuming each token to be 4 bytes (enough to encode each vocabulary item of 128k in its tokenizer), this implies 32 terabytes of data for pre-training.
%
\citeauthor{bakouch2025smollm3} also 
report using 3.2 trillion tokens during \texttt{stage-2}, \texttt{stage-3}, and the long context training stage (which we will here refer, jointly, to mid-training). 
This amounts to a total of 12.8 terabytes of data.
%
Finally, \citeauthor{bakouch2025smollm3} 
report using a total of 37.5 billion unique training tokens for post-training, which amounts to 150 gigabytes of data, which was gathered by combining multiple post-training datasets. In this data (which they name the SmolTalk2 dataset), they include data of verifiable instruction following based on rules of IFEval, and also synthetic questions sourced from GSM8K training set.

\paragraph{Olmo3 7B.}
%
\citet{olmo2025olmo3} report using 5.93 trillion tokens in the \texttt{stage-1} of training (i.e., for pre-training). 
This amounts to 23.7 terabytes of data for pre-training.
%
\citeauthor{olmo2025olmo3} also 
report using a total of 150 billion tokens for \texttt{stage-2} and long context specialized training (i.e., for mid-training). This amounts to 600 gigabytes of data.
%
Finally, \citeauthor{olmo2025olmo3} 
report using a total of 45 billion tokens in the post-training of the Olmo3 family. This amounts to 180 gigabytes of data. This includes stages of supervised fine-tuning, and also reinforcement learning with verifiable rewards. They release different variants of post-trained checkpoints, and we opt to use the last checkpoint in the \texttt{instruct} version.

\newpage

\section{Extra Results}\label{app:extra_results}

\begin{figure}[H]
    \centering
    \includegraphics[width=.92\linewidth]{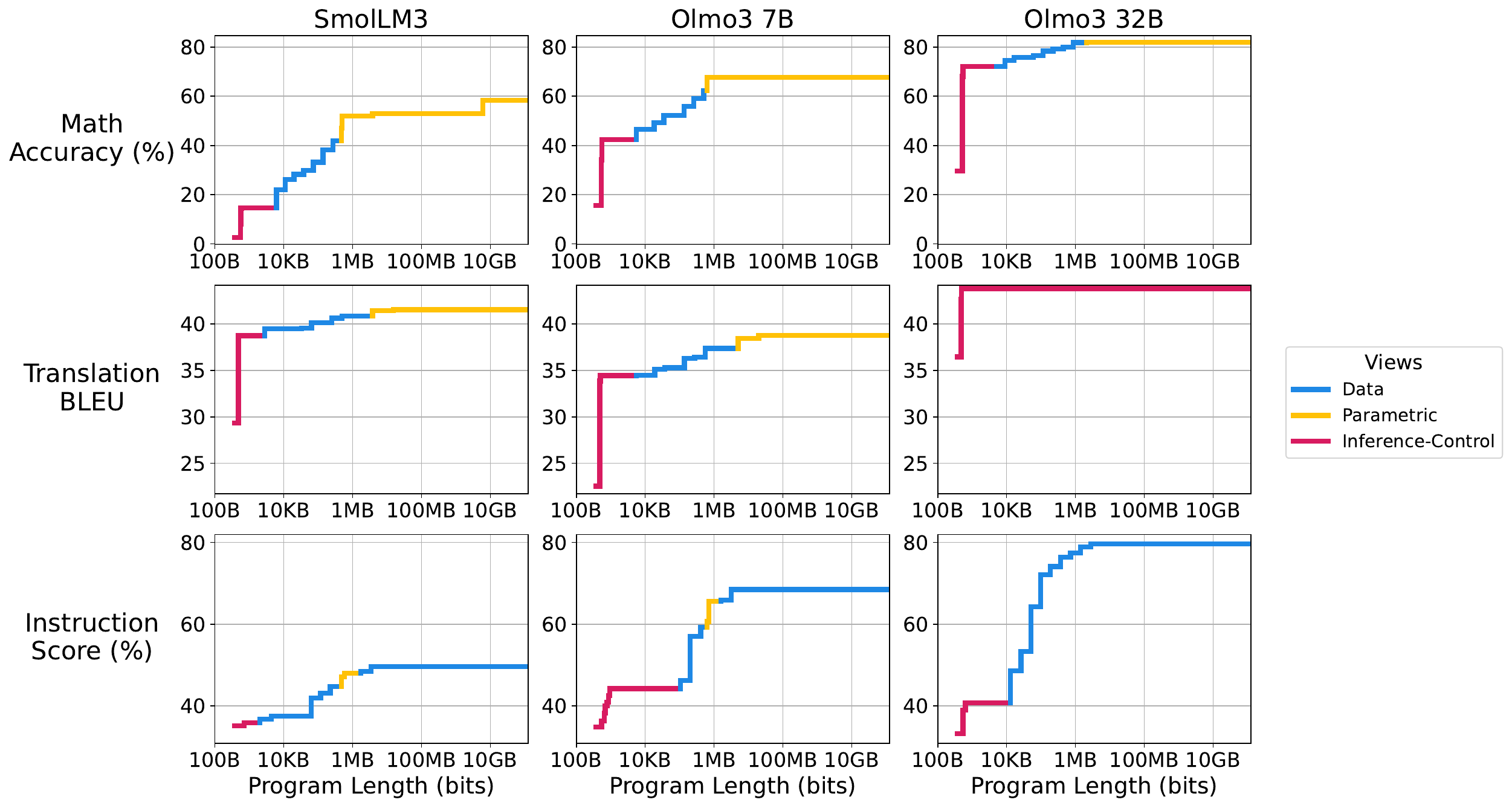}
    \caption{Length--performance Pareto curve of SmolLM3 3B, Olmo3 7B, and Olmo3 32B in Math (GSM8K), Translation (FLORES) and Instruction (IFEval). Different views of superficiality inform different regions of the Pareto curve, with some variance depending on the choice of model and task.}
    \label{fig:all-pareto-view}
\end{figure}

\begin{figure}[H]
    \centering
    \includegraphics[width=.7\linewidth]{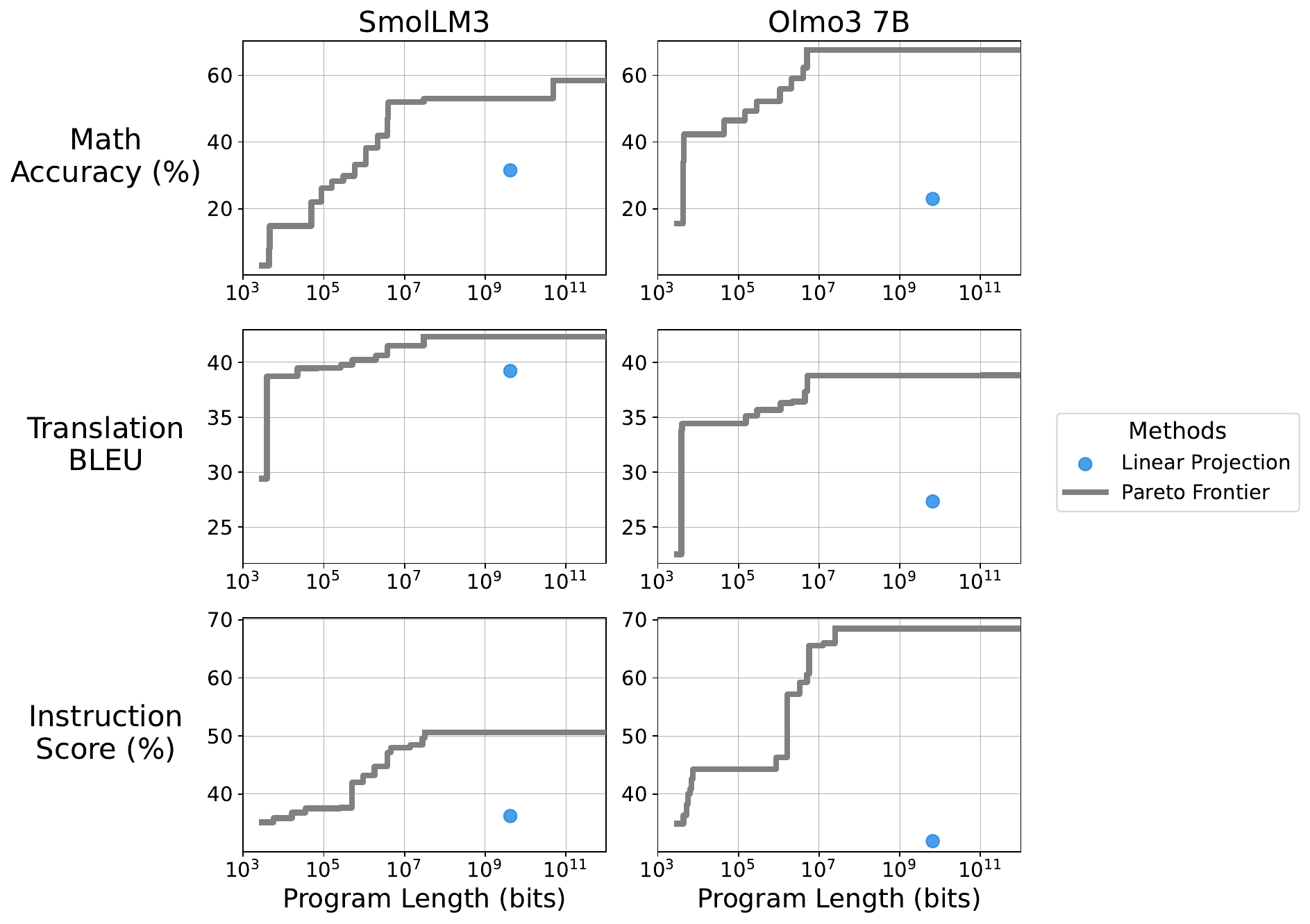}
    \caption{Performance of the linear (head-only) fine-tuning approach proposed by \citet{chen2025extracting} and our estimated Pareto curves in Math (GSM8K), Translation (FLORES) and Instruction (IFEval).\looseness=-1}
    \label{fig:lm-head-appendix}
\end{figure}

\begin{figure}[H]
    \centering
    \includegraphics[width=0.53\linewidth]{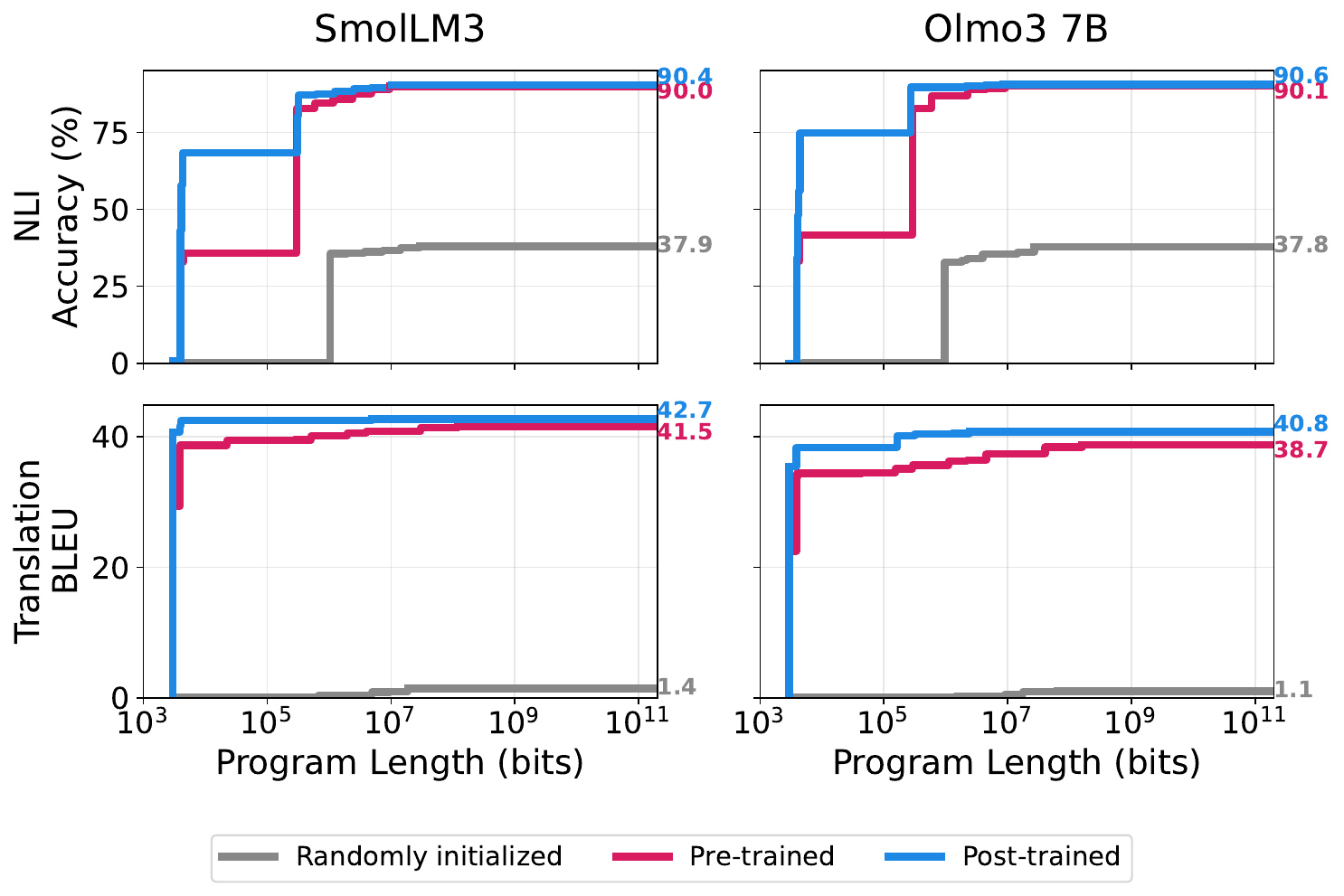}
    \caption{Estimated complexity curves conditioned on \textcolor{pretraining}{pre-} and \textcolor{posttraining}{post-}trained checkpoints of SmolLM3 and Olmo3 for FLORES and eSNLI.}
    \label{fig:task-complexity-flores}
\end{figure}

\begin{figure}[H]
    \centering
    \includegraphics[width=\linewidth]{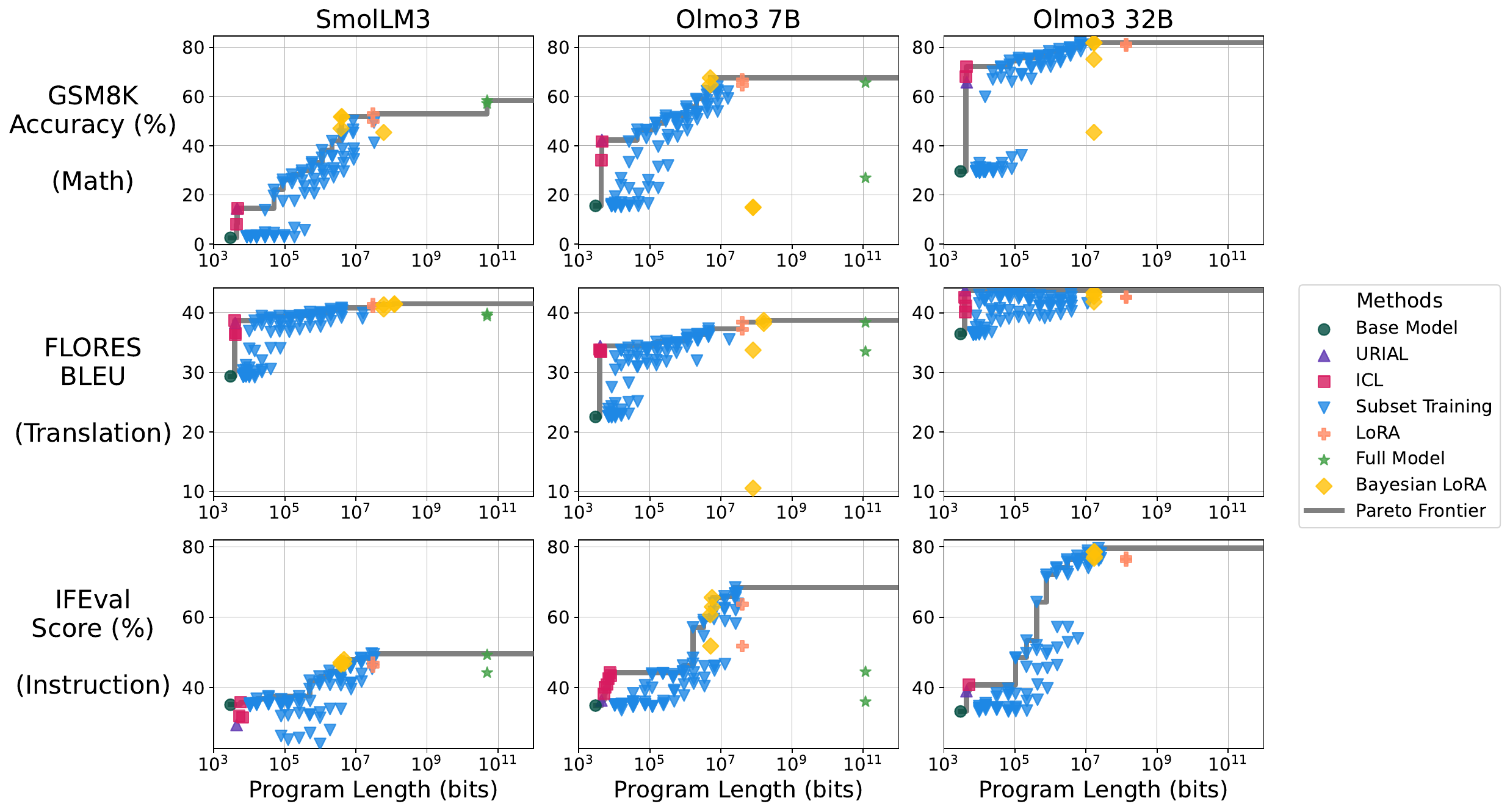}
    \caption{The program lengths vs. performance Pareto curves for SmolLM3 and Olmo3 in GSM8K and FLORES. We test each method described in \cref{sec:estimating-complexity}, and show their message lengths and performances obtained. This is an extended version of \Cref{fig:pareto-curves} with all the results from our hyperparameter sweep.} 
    \label{fig:pareto-estimation-hyperparam}
\end{figure}

\begin{figure}[H]
    \centering
    \includegraphics[width=0.5\linewidth]{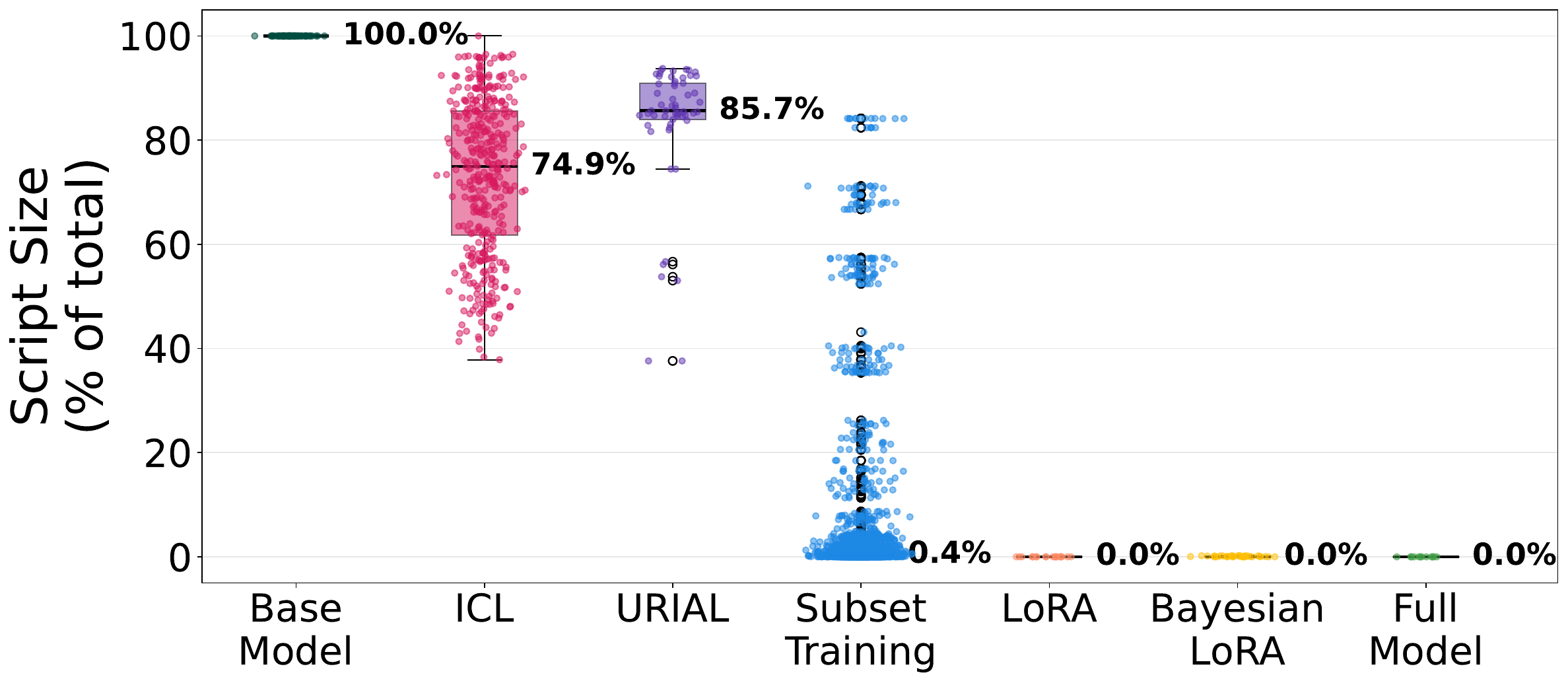}
    \caption{Distribution of percentage of the contribution to program length made by measuring the script size of each method described in \cref{sec:estimating-complexity} across all of our experiments. The script size mostly makes a substantial contribution in the Base Model, ICL and URIAL methods.} 
    \label{fig:distribution-program-length}
\end{figure}

\section{Combining the Parametric and the Data View}
\label{unified-view}

In section \cref{sec:estimating-complexity}, we showcased how every argument for superficiality can be unified under our task complexity lens. 
Each argument (the data view, the parametric view, and the inference-control view) proposes different strategies with which to build a program $\program$ with small length to adapt a model for a task. 
Since these strategies are roughly orthogonal, we hypothesise that we can compose them together to achieve tighter Pareto curves of $(\targetkolmogorov, \targetacc)$ values. 
Here, we develop a method to combine the \circledparams parametric view with the \circleddata data view, in which the program $\program$ needs to encode only few batches of the task, and few parameters which will be used to decide how to weight the gradient that each example produces.

\mymacro{\weightchange}{\Delta \model}
\mymacro{\alphas}{\alpha_i^k}
\mymacro{\bias}{b}
\mymacro{\biask}{\bias^{\layer}}
\mymacro{\deltabias}{\Delta \bias}
\mymacro{\modelk}{\model^k}
\mymacro{\outputk}{h^k}
\mymacro{\layer}{k}

Let $\model$ be the pre-trained model, and $\dataset$ be the full training dataset of task $\task$. We draw a small subset $\dataset' \subset \dataset$ and we fine-tune $\model$ on $\dataset'$, collecting each weight update $\weightchange_i$ during this process; here, $i$ indexes each batch observed in this fine-tuning run (from 1 to the number of batches in $\dataset'$). 
Now, 
let $\layer$ represent a linear projection layer in the analysed model, and $\modelk$ and $\biask$ represent the weights and bias of this layer. 
We initialise parameters $\alphas = 1$ which will be used to balance the updates of different $\weightchange_i$. 
We modify the forward pass of every layer to make use of these new $\alphas$ parameters. Originally, the forward pass of linear layer $\layer$, when receiving input $x$ is computed as $\outputk$:
\begin{equation}
    \outputk = \model^\layer x + \biask
\end{equation}
We modify this forward pass to:
\begin{equation}
    \outputk = \model^\layer x + \biask + \sum_i\alphas \left(\weightchange_i^\layer x + \deltabias_i^\layer \right)
\end{equation}

At initialisation, i.e., with $\alphas = 1$, the model's inference is exactly the same as the final model after training on $\dataset'$. 
We then freeze the model weights $\model$, and train parameters $\alphas$. 
We train these parameters using the rest of the dataset $\dataset$ which had not previously been in $\dataset'$.

Our program $\program$, then encodes the subset of the dataset $\dataset'$ used to collect gradients $\weightchange_i$, and the values $\alphas$ to get to the final version of the model. 
This combines the data view (encoding a subset of the dataset), and the parametric view (encoding very few parameters) and we hypothesised it could combine their strengths and allow use to push the Pareto frontier.

\Cref{fig:unified-view} shows results using this method with SmolLM3 and Olmo3 7B. 
After extensive experimentation, we concluded that it is not straightforward to push the Pareto frontier with this strategy. 
It remains unclear whether this indicates the tightness of our Pareto frontier, or just the lack of success of this specific approach. 
We release these results to encourage future research to target this question.

\begin{figure}[h]
    \centering
    \includegraphics[width=0.66\linewidth]{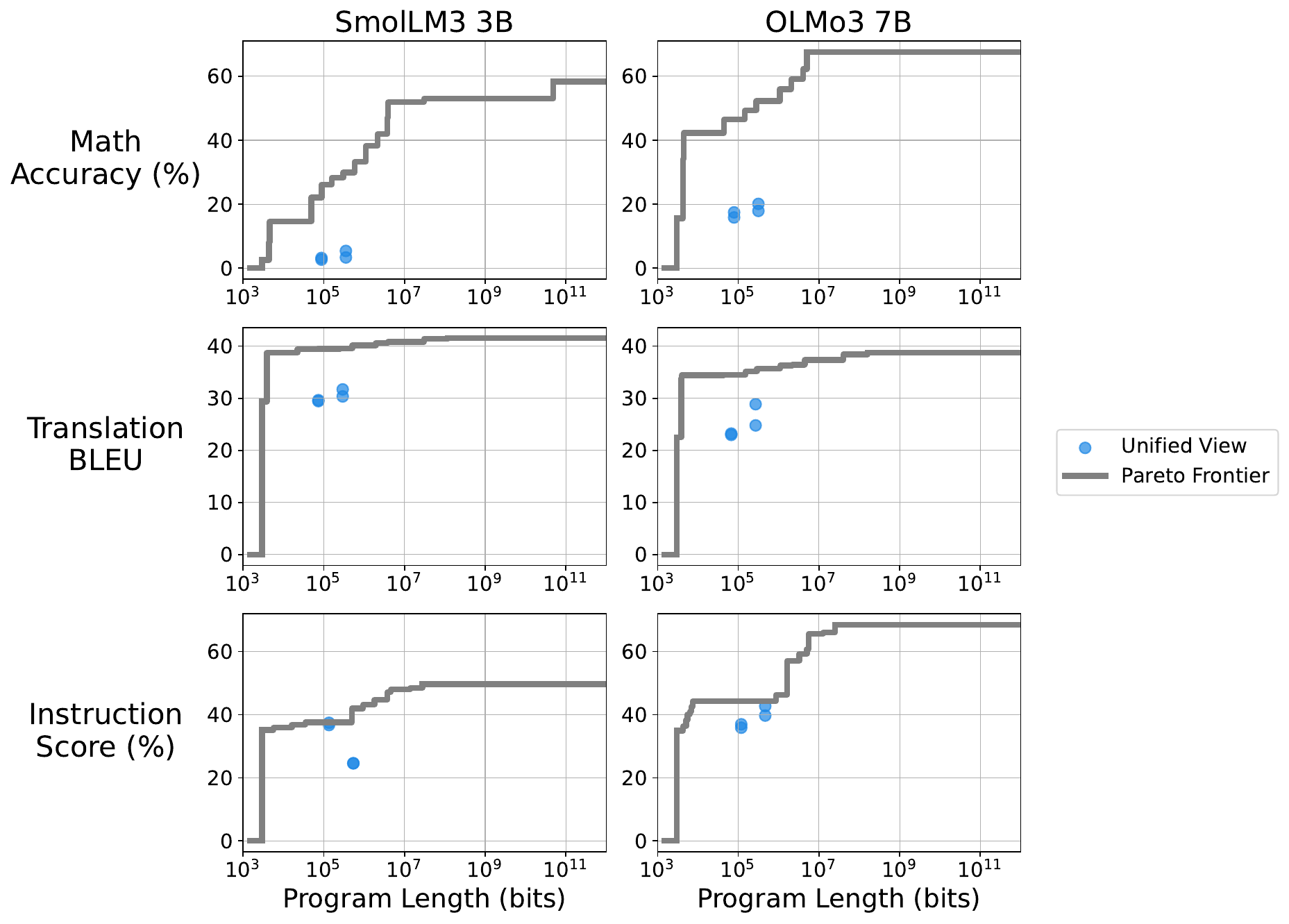}
    \caption{Pareto curves in Math (GSM8K), Translation (FLORES) and Instruction (IFEval) compared to our method proposed in \Cref{unified-view}. We are not able to push the Pareto frontier with our method.}
    \label{fig:unified-view}
\end{figure}

\section{Behind The Scenes}
\label{sec:behind-the-scenes}

We provide this section to document the process of arriving at this work, rather than just the final product. This section is written by the first author, to distil their personal learnings during this project.

This project started with the objective of more deeply understanding the SAH. At first, we approached this from an interpretability perspective, by drawing on the Sparse Autoencoders (SAEs) literature \cite{gao2025scaling}. We attempted to gain insights of how SAE features would change during adaptation. However, this implies training SAEs for different checkpoints of adaptation. After some months of work, we concluded that SAEs were not the right approach to answer our questions. Training SAEs even on the same checkpoint led to vastly different features \cite{paulo2026sparse}, which completely annulled any attempt at an experimental design to answer our research direction. The first author became increasingly frustrated with the lack of rigor in some of the literature in mechanistic interpretability, and we pushed to more theoretically grounded approaches to understanding the SAH.

Information theory appeared as the right tool to formalise the SAH. However, we were initially drawn to use the perspective of information by \citet{shannon1948mathematical}, which the authors were more familiar with. We could not find useful characterisations of the SAH using these definitions. After many months without progress, we landed on a compression perspective, i.e., \textit{How many bits are needed to adapt a model for a task?}. Experiments in this direction led to interesting initial results, which would later become \Cref{fig:pareto-curves}. We became interested in formalising such results, and only then related these to Kolmogorov complexity \cite{kolmogorov1965three}. We realised that it is not this exact notion of complexity, but a lossy version of it, similar to rate-distortion theory \cite{vereshchagin2010rate}. After iterating on definitions, we arrived at our formalisation of task complexity (see \Cref{theory}), and the rest of results quickly followed.

Many months of frustration and learning led to the culmination of this work. The main learning from this process is to commit to fundamental questions (such as formalising the SAH), without committing to any specific way of answering them (such as insisting on SAEs or Shannon's notion of information). Once a tool fits the problem (as Kolmogorov complexity did here), the results will follow.

\end{document}
